\newtheorem{lemma}{Lemma}
\newtheorem{observation}{Observation}
\newtheorem{hypothesis}{Hypothesis}
\newcommand{\bos}{\texttt{BOSTON}}
\newcommand{\nh}{\texttt{NHANES}}
\newcommand{\fl}{\texttt{FLIGHTS}}
\newcommand{\hi}{\texttt{HEALTH\_INSURANCE}}
\newcommand{\synd}{\texttt{SYNTHETIC\_DENSE}}
\newcommand{\syns}{\texttt{SYNTHETIC\_SPARSE}}
\newcommand{\bosgb}{\texttt{BOSTON\_GBDT}}
\newcommand{\nhgb}{\texttt{NHANES\_GBDT}}
\newcommand{\flgb}{\texttt{FLIGHTS\_GBDT}}
\newcommand{\higb}{\texttt{HEALTH\_INSURANCE\_GBDT}}
\newcommand{\bosdt}{\texttt{BOSTON\_DT}}
\newcommand{\nhdt}{\texttt{NHANES\_DT}}
\newcommand{\fldt}{\texttt{FLIGHTS\_DT}}
\newcommand{\hidt}{\texttt{HEALTH\_INSURANCE\_DT}}
\newcommand{\shapours}{\texttt{shap\_orig\_a}}
\newcommand{\shapfast}{\texttt{shap\_fast}}
\newcommand{\shaporig}{\texttt{shap\_orig}}
\newcommand{\ban}{\texttt{ban}}
\newcommand{\tr}{\mathcal{T}}
\newcommand{\lvs}{\mathcal{L}}
\def\twofigs #1{\hbox to \textwidth{#1}}
\newcommand{\EX}{\mathbb{E}}
\begin{document}

  \title{Improved Feature Importance Computations for Tree Models: Shapley vs. Banzhaf\thanks{All authors were supported by the ERC Consolidator Grant 772346 TUgbOAT.}}

  \author{
Adam Karczmarz\thanks{University of Warsaw, \tt{a.karczmarz@mimuw.edu.pl}}
\and
Anish Mukherjee\thanks{University of Warsaw, \tt{anish@@mimuw.edu.pl}}
\and
Piotr Sankowski\thanks{IDEAS NCBR, University of Warsaw, and MIM Solutions, Warsaw, \tt{sank@mimuw.edu.pl}}
\and
Piotr Wygocki\thanks{University of Warsaw, and MIM Solutions, Warsaw, {\tt{wygos@mimuw.edu.pl}}. Additionally supported by the Polish National Science Center Grant PRELUDIM 2018/29/N/ST6/00676.}
}

  \date{}

\maketitle
  \thispagestyle{empty}
\begin{abstract}
Shapley values are one of the main tools used to explain predictions of tree ensemble models. The main alternative to Shapley values are Banzhaf values that have not been understood equally well. In this paper we make a step towards filling this gap, providing both experimental and theoretical comparison of these model explanation methods. Surprisingly, we show that Banzhaf values offer several advantages over Shapley values while providing essentially the same explanations. We verify that Banzhaf values:
\begin{itemize}
\item have a more intuitive interpretation,
\item allow for more efficient algorithms,
\item are much more numerically robust.
\end{itemize}
We provide an experimental evaluation of these theses. In particular, we show that on real world instances.

Additionally, from a theoretical perspective we provide new and improved algorithm computing the same Shapley value based explanations as the algorithm of Lundberg et al. [Nat. Mach. Intell. 2020]. Our algorithm runs in $O(TLD+n)$ time, whereas the previous algorithm had $O(TLD^2+n)$ running time bound. Here, $T$ is the number of trees, $L$ is the maximum number of leaves in a tree, and $D$ denotes the maximum depth of a tree in the ensemble. Using the computational techniques developed for Shapley values we deliver an optimal $O(TL+n)$ time algorithm for computing Banzhaf values based explanations. In our experiments these algorithms give running times smaller even by an order of magnitude.
\end{abstract}

\clearpage
\setcounter{page}{1}

\section{Introduction}
The explainability of machine learning models has become one of the crucial aspects when deploying such models in practice. When
high-value decisions are taken, understanding why a model made a certain prediction is even more important than the
prediction's accuracy. In such applications, e.g., medical diagnostic, taking into account the necessity of human oversight
is a must. Thus we need to deliver methods that would interpret the model's results, so that humans are actually willing to follow model recommendations.

As a result, recently there has been a growing interest in feature attribution problems,
where one would like to distribute the prediction of a model $f$ to the individual features used in the model.
The feature attributions are used to explain the relative influence
of individual features to the model's prediction $f(x)$
on some specific input~$x$.

\paragraph{Feature influence.} The most popular approaches to interpreting model predictions
is based on so-called \emph{Shapley values}~(e.g.,~\cite{Lundberg2020,Lundberg2017,StrumbeljK14,SundararajanTY17}).
The attractiveness of this approach comes from the fact that Shapley values can be very efficiently computed in
the case of tree ensemble models. Although some papers suggest using Banzhaf
values~\cite{datta2,Sliwinski_Strobel_Zick_2019,patel2020high} in place of Shapley values, this alternative has not been understood equally well.

In general, the motivation for using the Shapley/Banzhaf values in this context comes from game theory.
Their use is justified by the fact that they are known to be the unique method
of measuring the importance, or value, of a player in a coalitional game $g:2^U\to\mathbb{R}$,
that satisfies a certain set of desirable axioms which differ slightly for Shapley and Banzhaf values. Here, $g$ gives an utility of a coalition,
$g(\emptyset)=0$ and $U$ is the set of players (or -- in our context -- features).

However, as explained in~\cite{SundararajanN20}, despite the uniqueness of Shapley values,
many different Shapley values-based attribution methods have been studied.
This is because, we need to define a \emph{set function} $g$
that extends $f$ to all subset of features $S\subseteq U$, i.e.,
$g$ allows us to drop features $U\setminus S$ of $x$. Given such a set function $g$, and $n=|U|$,
the Shapley value of the feature $i\in U$ is defined:
\begin{equation}\label{eq:shap}
  \phi_i=\frac{1}{n}\sum_{S\subseteq U\setminus\{i\}} {\binom{n-1}{|S|}}^{-1}\left(g(S\cup\{i\})-g(S)\right).
\end{equation}

Let us mention some concrete examples of $g$ used in~\cite{JanzingMB20,Lundberg2020,SundararajanN20}.
In the \emph{Baseline Shapley} approach (BShap), some baseline feature vector
${x'\in \mathbb{R}^U}$ is given. Then for any $S\subseteq U$
${g(S):=f(x_S,x'_{U\setminus S})-f(x'_U)}$, where $f(x_S,x'_S)$ is the output of the model
for a feature vector whose feature values for features in $S$ are taken
from $x$, whereas the values for features in $U\setminus S$ are taken
from $x'$.

In the \emph{Marginal Expectation Shapley}
approach (MES), $g(S):=\EX[f(x_S,X_{U\setminus S})]-\EX[f(X_U)]$,
where $X_{V}$ denote the random variables corresponding
to the values of features in $V\subseteq U$.
Similarly, in the \emph{Conditional Expectation Shapley} approach (CES),
$g(S):=\EX[f(X_U)|X_S=x_S]-\EX[f(X_U)]$.

We note that explanations based on Shapley values have been extensively studied experimentally~\cite{Lundberg2020,Lundberg2017,StrumbeljK14,SundararajanTY17}, whereas
in the case of Banzhaf values such study was only done on single data sets~\cite{datta2,Sliwinski_Strobel_Zick_2019,patel2020high}. Moreover, despite very high similarity
of both methods, only limited experimental comparison between them can be found in the literature. We are only
aware of the recent comparison of the both methods in~\cite{patel2020high}, which is limited to a single depth-3 tree.
We review other methods in Section~\ref{section:other}.

This paper aims to make a step towards filling this gap by providing theoretical as well as experimental
commissional study of these methods. In particular, our contribution is the first to clearly visualize that for tree ensemble models, Banzhaf values:
\begin{itemize}
\item have a more intuitive interpretation -- they correspond directly to the probability that a model prediction changes when changing a feature,
\item allow for more efficient algorithms -- in our experiments the speed-up is by an order of magnitude,
\item are much more numerically robust -- we observe that due to numerical errors, using Shapley values can lead to a wrong ordering of features.
\end{itemize}
Thus our paper adds new cardinal arguments towards usage of Banzhaf values for feature attribution.

\paragraph{Computational complexity.}
In general, computing Shapley and Banzhaf values is a computationally
hard problem~\cite{DengP94,Matsui2001}.
With just a black-box access to the set function, it seems
that a naive evaluation of the formula~(\ref{eq:shap}) is the best
we can hope for. However, for some specific models $f$ and choices of the set
function, Shapley values can be computed much faster.

One of the main contributions of Lundberg et al.~\cite{Lundberg2020} was to show efficient
algorithms computing Shapley values for binary tree ensemble models
for two different plausible set functions $g$ (see Section~\ref{s:prelims}).
One of these algorithms, called \texttt{TREESHAP\_PATH}~\cite[Algorithm 2]{Lundberg2020},
is particularly interesting and convenient;
it provides explanations without the help of the background
dataset.
Instead, it uses a set function based on subtree coverages (see Algorithm~\ref{alg:est})
proposed in the classical work Friedman~\cite{friedman2001} for estimating
partial dependence plots.
The algorithm runs in ${O(TLD^2+n)}$ time,
where $T$ is the number of trees, $L$ is the maximum number of leaves in a tree,
and $D$ denotes the maximum depth of a tree in the ensemble.
Note also that using Equation~\eqref{eq:shap} naively would take exponential $\Omega(TnL\cdot 2^n)$ time in this case.
As noted in~\cite{patel2020high}, the same algorithm applies to Banzhaf values.

Our first contribution is a new algorithm computing the same explanation
of a tree ensemble model prediction
as the \texttt{TREESHAP\_PATH} algorithm of Lundberg et al.~\cite{Lundberg2020}.
The algorithm is asymptotically even faster and runs in $O(TLD+n)$ time.
We demonstrate that for trees of depth apx. 20 it outperforms
the original algorithm. Applying the same techniques to Banzhaf values
we are able to obtain even faster algorithm running in $O(TL+n)$ time.

\subsection{Axioms, Efficiency, and Power Indices}
As already mentioned,
the use of Shapley values for attribution is a must if we want the attribution
to satisfy some natural axioms: \emph{Efficiency}, \emph{Sensitivity}, \emph{Linearity},
and \emph{Symmetry}. For their precise definition\footnote{\emph{Sensitivity} axiom is sometimes called the \emph{dummy/null player} axiom, whereas
\emph{efficiency} is also called \emph{completeness}.}, consult~\cite{JanzingMB20,SundararajanN20}.
The efficiency axiom is particularly interesting.
It says that the sum of individual attributions $\sum_{i\in U}\phi_i$
should equal exactly $g(U)$.
In all the known approaches for defining $g$ (cf.~\cite{SundararajanN20}),
$g(U)$ equals
precisely the difference between the prediction and some baseline.
As argued in~\cite{SundararajanTY17}, efficiency
\emph{``is a sanity check that the attribution method is somewhat comprehensive in its accounting, a property that is clearly desirable if the score is used in a numeric sense, and not just to pick the top label, for e.g., a model estimating insurance premiums from credit features of individuals''}.
However, the cost of this mathematical property is a rather nonintuitive meaning of these values. In order to guarantee efficiency, one needs to weight
the contribution of each feature with the total number of orderings of the present as well as the absent features as given by~\eqref{eq:shap}. It is rather unclear
why feature vector $(man,40)$ is different from $(40,man)$, and why this should matter for feature importance.

This nonintuitive aspect becomes clearly visible when moving out of the cost-sharing regime to the study of voting power indices. Applying this concept
to model explanation, power indices essentially measure the voting power of a particular feature on the decision taken by the model.
There are several options for power indices with two being dominating ones: the Shapley-Shubik power index and the Banzhaf power index.
In some cases, Banzhaf index works better~\cite{doi:10.1111/1467-9248.00356} whereas in others Shapley-Shubik \cite{RePEc:ema:worpap:2007-25}.
Shapley-Shubik index uses Shapley values~\eqref{eq:shap}, whereas Banzhaf index
attributes \emph{Banzhaf values}~$\beta_i$ defined:
\begin{equation}\label{eq:ban}
  \beta_i=\frac{1}{2^{n-1}}\sum_{S\subseteq U\setminus\{i\}}g(S\cup \{i\})-g(S).
\end{equation}
Different axiomatic parameterizations have been given for both concepts, see e.g.~\cite{RePEc:inm:ormoor:v:26:y:2001:i:1:p:89-104}, and different paradoxes (i.e., nonintuitive
properties) were found~\cite{Koczy2009}.
Compared to Shapley values, Banzhaf values satisfy \emph{Sensitivity}, \emph{Linearity}, and \emph{Symmetry},
but not \emph{Efficiency}. This axiom is replaced by the \emph{2-Efficiency} axiom~\cite{Lehrer1988}. When studying power indices, the efficiency axiom is not considered
as a must which is the case in cost-sharing games~\cite{Hart1989}. This axiomatic approach has been followed
in~\cite{Lundberg2017,datta2,Sliwinski_Strobel_Zick_2019,patel2020high}, and analogous axiomatizations have been proven for feature values based on these concepts.

We note, however, that in the case of power-indices the discussion on different options is rather complex~\cite{power}.
In general, one points out that aspects other than existence of axioms should be used to decide which power index to use~\cite{kurz}.
Moreover, it is argued that ``axiomatic approach by itself is insufficient to settle the question of the
choice of a power index''~\cite{RePEc:ivi:wpasad:1999-10}, i.e., the consequences of different axioms can be rather non-intuitive and axioms
that seem to be the most basic ones can lead to paradoxes~\cite{Koczy2009}.
Laruelle~\cite{RePEc:ivi:wpasad:1999-10} suggests that in order to choose between Banzhaf and Shapley
probabilistic approach could be followed. In this interpretation,
Shapley index can be used when the order of players/features matters, whereas Banzhaf values should be used when this order is not important.
Nevertheless, it is observed that both methods deliver very similar results, and one usually computes both when doing experimental evaluations --
as in \cite{doi:10.1111/1467-9248.00356, RePEc:ema:worpap:2007-25}. 
Moreover, it has been formally proven that Banzhaf and Shapley-Shubik indices are
consistent when applied to restricted voting systems~\cite{MomoKenfack2019}, i.e., return the same ordering of players.
These observations motivated us to formulate the following hypothesis.
\begin{hypothesis}
\label{hyp-1}
Banzhaf and Shapley values lead to the same ordering of features for tree ensemble models.
\end{hypothesis}
We note that a priori it is unclear whether the above could be true, as
the above observations hold for voting systems which satisfy much more properties than decision models considered here.

\subsection{Banzhaf Values vs Shapley Values}
The main contribution of this paper is to fill in the experimental gap, by providing study of Shapley and Banzhaf values on tree ensemble models. In particular,
we experimentally verify Hypothesis~\ref{hyp-1} and show that both methods deliver essentially the same average importance scores for the four studied datasets.
Moreover, we prove that for monotone functions on a hypercube both measure give the same ordering.
Hence, differences in axiomatic definitions do not seem to actually deliver the answer on which index to use. Even though Banzhaf values do not obey the efficiency axiom, we argue that
they have several important advantages over Shapley values when applied to feature importance measurement.
\paragraph{Intuitive meaning.}
As pointed out in the power index literature~\cite{10.2307/3689345}, many researchers believe that the main advantage of Banzhaf values
is their more intuitive definition.
In the voting setting, the Banzhaf power index is defined as the {\em expected change to the model output when
a given vote is added}. In order to define the Shapley-Shubik power index, one considers
an ordered process of forming voter lists, i.e., one adds players one by one to the ordered list and computes
how often a given voter changes the outcome. When reinterpreting the Banzhaf power index in the machine learning
model setting, one essentially
obtains that a Banzhaf value corresponds to the expected change of the model prediction when a given feature is added.
As argued above, using Shapley values does not allow for such a straightforward interpretation.
\paragraph{Computational efficiency.}
As for Shapley values, in general, computing Banzhaf values
requires exponential time~\cite{Kislaya90}.
However, we show that Banzhaf values are tailored to
work with tree ensemble models even better than Shapley values.
The \texttt{TREESHAP\_PATH} explanation algorithm
of Lundberg~\cite{Lundberg2020}, adjusted to compute
Banzhaf values instead (when using the same natural set function $g$)
has running time $O(TLD+n)$, which constitutes a factor-$D$
speedup over the Shapley-based version.
More importantly, with the new ideas that we use
to speed-up the \texttt{TREESHAP\_PATH} algorithm,
we are able to obtain an $O(TL+n)$ time
algorithm for computing Banzhaf values for this choice of $g$.
Note that this algorithm is \emph{asymptotically optimal}, since
even the description of a tree ensemble has size $O(TL)$, and the output has size $O(n)$.
In experiments, this algorithm visibly outperforms all other algorithms
based on that set function, and
can lead to considerable time saving when computing feature importance values for decision tree-based models in practice.

We note that other Shapley values-based algorithm for trees proposed in~\cite{Lundberg2020} (\texttt{TREESHAP\_INT})
can be also easily modified to yield Banzhaf values instead within the same time bound.

\paragraph{Numerical accuracy.}
While experimenting with different implementations of Shapley value-based explanations for tree ensemble models,
we have observed that the results obtained
using different implementations are different.  These observations are discussed in Section~\ref{section:numerical}. We
note that an implementation that would use single numerical precision would be essentially useless, whereas
for double precision differences in ordering of features can be seen for trees of depth~4 (\nhgb{} instance, the difference in ordering within the first 10 features). Moreover,
we provide a very simple synthetic example of non-balanced trees showing that for depth apx. 50 numerical
errors would dominate the result. Trees of even higher depth are sometimes used in practice~\cite{s20216075}.

Banzhaf values, due to a simpler definition,
are here much more robust and only very small errors are visible in the computations. Essentially, for
trees considered in our experiments, numerical errors are not observable. This is further illustrated on
the synthetic example where numerical errors of Banzhaf method are negligible.

\paragraph{Approximate equivalency.}
In our experimental evaluation, we show that in practice the Banzhaf and Shapely values give approximately the same feature importance.
For three instances, the ordering of the mean absolute values for the top~20 most important features was the same (Figures \ref{fig:boston},~\ref{fig:health},~\ref{fig:nhanes}).
Usually, the ordering of features is very similar, e.g., for three datasets it is enough to perform a single swap on average on the first 10 features to change Shapley-based feature ordering to Banzhaf-based ordering (see Table \ref{tab:cayley}). In Section~\ref{s:detailed}, we present a comparison for mean average error and root mean square error for measuring the distance between Shapley values and Banzhaf values. Moreover, we show the worst-case comparison.

In our opinion, these arguments indicate that for tree models, Banzhaf values should be preferred in practical applications. Both methods deliver comparable explanations, but Banzhaf values work faster and
are much less prone to numerical errors.

From the theoretical perspective, we consider monotone functions on a hypercube, i.e., functions that for a given feature $i$ and any feature vector $x$, setting $x_i := 1$ (similarly $x_i := 0$) is either always increasing (or always decreasing) our value function. We prove that for such functions  and uniform distribution of the dataset, any power index in the form:
\[\sum_{S\subseteq U\setminus\{i\}}w(S)\left(g(S\cup \{i\})-g(S)\right),\]
for a distribution $w:2^U\to\mathbb{R}$, $\sum_{S\subseteq U}w(S)=1$, gives exactly the same mean absolute importance of features. In particular, this shows
that for some natural instances using Banzhaf and Shapley values
for computing global feature importance is equivalent.
See Section~\ref{monotone} for more details.

\section{Preliminaries}\label{s:prelims}
Let $U:=\{1,\ldots,n\}$ be a set of \emph{features}.
Let $x$ generally denote a \emph{feature vector}, i.e.,
$x$ is formally a function $x:U\to \mathbb{R}$.
For $i\in U$, we write $x_i$ to refer to the \emph{value}
of $i$-th feature in $x$.
More generally, for any subset $S\subseteq U$ we write
$x_S$ when referring to the function $\left.x\right|_S$.
We sometimes talk about random feature vectors, or consider
the values of individual features as random variables.
We then write $X$ or $X_i$ respectively.
We write $X_S$ to denote the set of random variables $\{X_i:i\in S\}$.
Let $\bar{S}$ denote the complement $U\setminus S$.
Let $f:\mathbb{R}^U\to \mathbb{R}$ be the output function
of our model.

\paragraph{Trees.} We focus on tree ensemble models, where the output of the model
is simply the average output of its $T$ individual trees.
For simplicity, we assume $T=1$ but still
incorporate the variable $T$ when stating the time complexities (as done in~\cite{Lundberg2020}):
the algorithms we discuss run independent computations
for all the trees in the ensemble.
Also, we assume the individual trees are roughly of the
same size and depth.

When talking about the input decision tree $\tr$, we adopt the notation of~\cite{Lundberg2020}.
$\tr$ is a binary tree based on single-variable splits:
each non-leaf node $v\in \tr$ is assigned a \emph{feature} $d_v$, and
a \emph{threshold} $t_v$, whereas each leaf $l$ is assigned a \emph{value}~$f(l)$.
Let $a_v,b_v$ denote the left and right children of a non-leaf
node $v\in \tr$. The output $f(x)$ is computed by following a root-leaf
path in $\tr$: at a non-leaf node $v\in \tr$, we descend to $a_v$ if
$x_{d_v}< t_v$, or to $b_v$ otherwise. When a leaf is reached,
its value is returned. Denote by $\lvs(\tr)$ the set of leaves of $\tr$.
Denote by $\tr[v]$ the subtree of $v$ rooted at~$v$.
Let $L$ and~$D$ denote the number of leaves and the depth
of the tree $\tr$, resp.

\paragraph{Set functions and Shapley values.}
We write $f(x_S,X_{\bar{S}})$ when referring to a random
variable defined as the value of $f$ if the
values for features in $S$ are fixed to the respective
values of $x$, and the values $X_{\bar{S}}$ are random
variables.
Let $X_U$ be distributed as in the training set.

Given a \emph{set function} $g:2^{U}\to \mathbb{R}$
with $g(\emptyset)=0$, the Shapley values $\phi_i$
for $i\in U$ are defined as in Equation~\eqref{eq:shap}.

Lundberg et al.~\cite{Lundberg2020} and Janzing et al.~\cite{JanzingMB20} suggest
using the following
idealized set function $g^*$ for feature attribution:
\begin{equation}\label{eq:g-ideal-def}
  g^*(S):=\EX[f(x_S,X_{\bar{S}})]-\EX[f(X_U)].
\end{equation}
Note how the term $\EX[f(X_U)]$ in~(\ref{eq:g-ideal-def})
cancels out when computing Shapley values from Equation~\eqref{eq:shap}.
Thus, for simplicity in the following we can redefine
${g^*(S):=\EX[f(x_S,X_{\bar{S}})]}.$

Using the idealized set function $g^*$ would be computationally
too costly.
Consequently, Lundberg et al.~\cite{Lundberg2020} considers
two different set functions that ``approximate'' $g^*$.

In the \texttt{TREESHAP\_PATH} algorithm, the approximation $g(S)\approx g^*(S)$ is computed using Algorithm~\ref{alg:est}.
This method dates back to the classical work of Friedman~\cite{friedman2001}
and is also implemented as a way to compute partial dependence plots
in the scikit-learn package~\cite{scikit-learn}.
Its one advantage is that it does not require access
to the training data, but merely to the ``coverages'' $r_v$ of all the subtrees $\tr[v]$,
i.e., the numbers of training set points that fall into $\tr[v]$.
It can be proved that this method computes $\EX[f(x_S,X_{\bar{S}})]$
if the individual feature random variables $X_i$ are independent.
With such a set function~$g$, Lundberg et al.~\cite{Lundberg2020} show how to compute
Shapley values $\phi_i$ exactly
in $O(TLD^2+n)$ time.

On the other hand, in the \texttt{TREESHAP\_INT} algorithm, Lundberg et al.~\cite{Lundberg2020}
estimate $g^*(S)$ by sampling some number $R$ of random points $x'$ of the
training data and computing the average value of $f(x_S,x'_{\bar{S}})$
over these samples.\footnote{In~\cite{SundararajanN20} this method is called \emph{Random Baseline Shapley}.}
Note that if the entire data was sampled, this would compute the
desired expectation exactly. The computation
cost would be then unacceptable, though.
The \texttt{TREESHAP\_INT} algorithm computes the Shapley values
$\phi_i$ exactly (for the described approximation of $g^*$) in $O(TRL+n)$ time.
We stress that this method requires access to the training data.

In the remaining part of the paper, we denote by $g(S)$ the
output of Algorithm~\ref{alg:est} for the subset $S\subseteq U$,
i.e., we consider the same approximation of $g^*(S)$ as
in the \texttt{TREESHAP\_PATH} algorithm of~\cite{Lundberg2020}.

\section{Improved Algorithm: Outline}
In this section we sketch an improved algorithm
computing the same Shapley value explanations
as the \texttt{TREESHAP\_PATH} algorithm
that takes
${O(TLD+n)}$ time in the worst case.
As we later show, the improvement is indeed noticeable experimentally
for unbalanced decision trees with large depth.

Let us first describe the idea behind the ${O(LD^2+n)}$ time algorithm
of Lundberg et al.~\cite{Lundberg2020} for $T=1$. To this end, we need to fix some more notation.
Let~$\rho$ be the root of $\tr$. Let~$p_v$ denote the parent of a node $v\in \tr$, $v\neq \rho$.
Let $F_v$ be the set of features assigned to the ancestors
of $v$, i.e., ${F_\rho=\emptyset}$, and $F_v=F_{p_v}\cup \{d_{p_v}\}$ for $v\neq \rho$.
The value $P[v]=r_v/r_\rho$ can be thought as the probability that the model
returns a value from the subtree of $v$.
Moreover, note that the output of Algorithm~\ref{alg:est} for $S=\emptyset$
is precisely equal to $\sum_{l\in\lvs(\tr)} P[l]\cdot f(l)$.
More generally, denote by
$P[v,S]$ the weight from the ancestor recursive calls
assigned to the subtree rooted at $v$
when running Algorithm~\ref{alg:est} with an arbitrary subset $S\subseteq U$.
Formally, $P[\rho,S]=1$, and for any $v\neq \rho$,
\begin{equation*}
  P[v,S]=\begin{cases}
      P[p_v,S]\cdot \frac{r_v}{r_{p_v}} & \text{ if } d_{p_v}\notin S,\\
      P[p_v,S]\cdot [x_{d_{p_v}}< t_{p_v}] & \text{ if }  d_{p_v}\in S, v=a_{p_v},\\
      P[p_v,S]\cdot [x_{d_{p_v}}\geq t_{p_v}] & \text{ if }  d_{p_v}\in S, v=b_{p_v}.\\
  \end{cases}
\end{equation*}

\begin{algorithm}[tb]
  \small
  \caption{Estimating $\EX[f(x_S,X_{\bar{S}})]$.}
\label{alg:est}
  \textbf{function} $\texttt{Desc}(S,v)$
\begin{algorithmic}[1]
  \IF{$v\in \lvs(\tr)$}
    \STATE \textbf{return} $f(v)$
  \ELSE
    \IF{$d_v\in S$}

    \STATE \textbf{return} (\textbf{if} $x_{d_v}< t_v$ \textbf{then } $\texttt{Desc}(S,a_v)$ \textbf{else }$\texttt{Desc}(S,b_v)$)

    \ELSE
      \STATE \textbf{return} $\frac{1}{r_v}\cdot \left(r_{a_v}\cdot \texttt{Desc}(S,a_v)+r_{b_v}\cdot \texttt{Desc}(S,b_v)\right)$
    \ENDIF
  \ENDIF
\end{algorithmic}
  \textbf{function} $g(S)$
\begin{algorithmic}[1]
  \STATE \textbf{return} $\texttt{Desc}(S,\rho)$
\end{algorithmic}
\end{algorithm}

Then, the algorithm outputs
\begin{equation}\label{eq:tpd}
  \sum_{l\in\lvs(\tr)} P[l,S]\cdot f(l)=g(S)\approx g^*(S)=\EX[f(x_S,X_{\bar{S}})].
\end{equation}

First of all, each value $\phi_i$ is obtained
by summing the contributions of each individual leaf $l\in\lvs(\tr)$
with $i\in F_l$
to the sum~(\ref{eq:shap}) with $g$ defined as in~(\ref{eq:tpd}).\footnote{If $i\notin F_l$
then it can be
shown that the contribution of $l$ to $\phi_i$ is~$0$. This is also why the \texttt{TREESHAP\_PATH} algorithm's
dependence on $n$ is just $O(n)$.}
This is in turn achieved as follows.
For any subset $G\subseteq U$, let
\begin{equation*}\label{eq:dpdef}
  \phi(v,G,k):=\frac{1}{|G|+1}\sum_{\substack{S\subseteq G\\|S|=k}} \binom{|G|}{k}^{-1}\cdot P[v,S].
\end{equation*}
Let us call the following vector of values a \emph{state} for $(v,G)$:
\begin{equation*}
  \Psi(v,G)=(\phi(v,G,k))_{k=0}^{|G|}
\end{equation*}
One can prove that moving between ``nearby'' states can be performed efficiently.
Namely, given $\Psi(v,G)$, in $O(|G|)$ time one can compute each of
the states:
$\Psi(p_v,G)$, $\Psi(a_v,G)$, $\Psi(b_v,G)$, $\Psi(v,G\cup\{i\})$, $\Psi(v,G\setminus\{i\})$,
for any feature $i\in U$.

Let $\phi(v,G)=\sum_{k=0}^{|G|}\phi(v,G,k)=||\Psi(v,G)||_1$. Clearly,
given $\Psi(v,G)$, $\phi(v,G)$ can be obtained easily in $O(|G|)$ time.
One can show that the individual contribution of leaf $l$ to $\phi_i$
for $i\in F_l$
equals
\begin{equation}\label{eq:leaf-contrib-b}
  f(l)\cdot (\phi(l,F_l)-\phi(l,F_l\setminus\{i\}))
\end{equation}

Using our notation, the algorithm of Lundberg et al. does the following.
First, all states $\Psi(v,F_v)$ for $v\in \tr$ are computed using a
simple
recursive tree traversal in time
\begin{equation*}
  O\left(\sum_{v\in \tr}|F_v|\right)=O\left(\sum_{l\in \lvs(\tr)}|F_l|\right)=O(LD),
\end{equation*}
since one can obtain $\Psi(v,F_v)$ from $\Psi(p_v,F_{p_v})$ in $O(|F_v|)=O(D)$
time.
This also gives all the values $\phi(l,F_l)$ that we need
when computing leaf contributions using~\eqref{eq:leaf-contrib-b}.
Afterwards, for each leaf $l\in \tr$, all the $|F_l|$ states
$\Psi(l,F_l\setminus\{i\})$ for $i\in F_l$ can be computed in $O(|F_l|)$ time
each. Given such a state, the leaf $l$'s contribution to $\phi_i$ can be computed in $|F_l|$ time as well.
As a result, through all pairs $(l,i)$, this takes
 $ O\left(\sum_{l\in \lvs(\tr)}|F_l|^2\right)=O(LD^2)$
time. The algorithm sketched above is described
in detail in Section~\ref{a:simple}.

The general idea behind our improved algorithm is to avoid
computing all the leaf contributions to each $\phi_i$ separately.
Instead, for a node $v\in \tr$ with $d_v=i$, we compute
the total contribution to $\phi_i$ of all leaves $\lvs_v\subseteq  \tr[v]$ for which $v$ constitutes the nearest ancestor with $d_v=i$ at once.
To this end, we show that, roughly speaking,
we can ``remove'' a feature $i$ from the sum
$\Phi(v):=\sum_{l\in\lvs_v}f(l)\cdot \phi(l,F_l)$
of all the leaf states in the subtree $\tr[v]$
(and thus obtain the sum $\sum_{l\in \lvs_v}f(l)\cdot \phi(l,F_l\setminus\{i\})$)
in $O(D)$ time as well.
This in turn allows us to get the second step of the algorithm in~\cite{Lundberg2020}
replaced with a bottom-up computation with $O(LD)$ cost.

Actually, our improvement requires some more technical care and only
works if all the sets $F_l$ have a uniform size $h$.
However, a binary tree with all sets $F_l$ having the same size~$h$
has $\Theta(2^h)$ leaves and thus padding the tree
with ``dummy'' leaves would be too costly.
To deal with this problem, we instead consider states
$\Psi(l,F_l^*)$ where $F_l^*$, $F_l\subseteq F_l^*$, is the set $F_l$ padded with $h-|F_l|$
``dummy'' features that do not appear in the tree's nodes at all.
Details can be found in Section~\ref{a:faster}.

\paragraph{Banzhaf Values for Trees}
The importance of individual features for a single prediction
using the \emph{Banzhaf values} are defined as in Equation~\eqref{eq:ban}.
Recall that we fixed $g$ to be the same approximation of $g^*$ that we used
in the previous section.
In Section~\ref{a:banzhaf} we show that for this particular set function $g$, the Banzhaf values as defined
in~\eqref{eq:ban} can be computed using an algorithm
analogous to \texttt{TREESHAP\_PATH} of~\cite{Lundberg2020} in $O(LD)$ time.
Moreover, by applying our optimization to the \texttt{TREESHAP\_PATH}
algorithm, the time bound can be further reduced to optimal $O(L)$.
Intuitively, this is possible since the coefficients of
individual terms $g(S\cup\{i\})-g(S)$ in the sum~(\ref{eq:ban})
do not depend on the size $|S|$.
As a result, an analogously defined \emph{state}
$\beta(v,G)$ consists of only one value:
\begin{equation*}
  \beta(v,G):=\frac{1}{2^{|G|}}\sum_{S\subseteq G} P[v,S],
\end{equation*}
and one can again ``move'' between the ``nearby''
states in $O(1)$ time, as opposed to $O(L)$ time
which was the case for the Shapley values computation.

\section{Monotone Functions on Hypercube}\label{monotone}
In this section we give an idealized example that sheds some light on why
Shapley and Banzhaf values can agree on some datasets. Essentially,
when the dataset gets close to the uniform case, i.e., many configurations
of features are present, we should see that both importance measures
are equal. More formally, let $U=\{1,\ldots,k\}$, and let $\mathcal{D}$ be some dataset.
Suppose for each $i\in U$ we have some feature attribution function $\gamma_i:\mathcal{D}\to \mathbb{R}$.
Let us consider the \emph{global impact} of the feature over dataset $\mathcal{D}$
measured as
\begin{equation*}
  \Gamma_i=\sum_{x\in \mathcal{D}} |\gamma_i(x)|.
\end{equation*}
For example, we can set $\gamma_i=\phi_i$ to get a \emph{Shapley global impact} $\Phi_i$,
or we can set $\gamma_i=\beta_i$ to get a \emph{Banzhaf global impact} $B_i$.
This measure of global feature importance has also been used
by Lundberg et al.~\cite{Lundberg2020}.

In this section, we consider the set of \emph{monotone} functions on uniformly-distributed vertices of a $k$-dimensional hypercube and show that the Shapley global impacts and the Banzhaf global impacts always agree on them.
When defining Shapley and Banzhaf values we
use the idealized set function \linebreak
${g^*_x(S)=\EX[f(x_S,X_{\bar{S}})]}$ as in Section~\ref{s:prelims}.

More precisely, let the considered dataset be $\mathcal{D}=\{0,1\}^k$ and the data is spread uniformly,
i.e., for each a random variable $X\in\mathcal{D}$, we have $\mathbb{P}[X=x]=2^{-k}$ for all $x\in\mathcal{D}$.
For a subset of indices $S$ and a vector $x \in \{0,1\}^k$ let $\mathcal{D}_{S,x}$ be the set of all points in $\mathcal{D}$ which matches $x$ on the set $S$, i.e.,  $\mathcal{D}_{S,x} = \{x' | x'\in \mathcal{D}, x'_{S}=x_{S}\}$.

Let $f:\mathcal{D} \rightarrow \mathbb{R}$ be a function.
Let $x^{\neg i}$ denote~$x$ with the value of feature $i$ flipped to the opposite value.
We say that $f$ is monotone in feature $i$ if for each $\alpha\in \{0, 1\}$ all the numbers in the set
${\{f(x) - f(x^{\neg i}) | x\in \mathcal{D}, x_i=\alpha \}}$ have the same sign.

Let $w:2^U\to \mathbb{R}_{\geq 0}$ be a \emph{coefficient function} such that for any $i\in U$, we have
$\sum_{S\subseteq U\setminus\{i\}} w(S)=1$.
For data point $x\in\mathcal{D}$, we define the $w$-value of a feature $i$ as follows:
\begin{equation*}
  \omega_{i}^w(x) =\sum_{S\subseteq U \setminus \{i\}} w(S)\cdot \left(g^*_x(S\cup\{i\})-g^*_x(S)\right).
\end{equation*}
Observe that the appropriate choice of the coefficient function $w$ can yield Shapley values, Banzhaf values,
or, e.g., probabilistic values~\cite{weber1988probabilistic}.
Moreover, let the total $w$-impact be $\Omega_i^w=\sum_{x\in\mathcal{D}}\omega_i^w(x)$.

In our settings, we have:
\begin{equation*}
  g^*_x(S)=\EX[f(x_S,X_{\bar{S}})]=\sum_{x'\in \mathcal{D}_{S,x}}\frac{f(x')}{|\mathcal{D}_{S,x}|}
\end{equation*}

Now we prove that the impacts $\Omega_i^w$ for $i\in U$ are independent
of the coefficient function $w$ for our choice of $\mathcal{D}$, the uniform distribution,
and monotone functions $f:\mathcal{D}\to\mathbb{R}$.
This will prove that for all $i\in U$, $\Phi_i=B_i$.

We have:
\begin{align*}
  \omega_i^w(x) &= \sum_{S\subseteq U \setminus \{i\}} w(S)\cdot \left(g^*_x(S\cup\{i\})-g^*_x(S)\right)\\
  &= \sum_{S\subseteq U \setminus \{i\}} w(S)\left(\sum_{x'\in \mathcal{D}_{S\cup\{i\},x}}\frac{f(x')}{|\mathcal{D}_{S\cup\{i\},x}|}-\sum_{x'\in \mathcal{D}_{S,x}}\frac{f(x')}{|\mathcal{D}_{S,x}|}\right)\\
  &=\sum_{S\subseteq U \setminus \{i\}} w(S)
  \left(\sum_{x'\in \mathcal{D}_{S\cup\{i\},x}}\frac{f(x')}{2^{k-|S|-1}}-\sum_{x'\in \mathcal{D}_{S,x}}\frac{f(x')}{2^{k-|S|}}\right)\\
  &=\sum_{S\subseteq U \setminus \{i\}} w(S)\cdot 2^{|S|-k}
  \left(\sum_{x'\in \mathcal{D}_{S\cup\{i\},x}}2f(x')-\sum_{x'\in \mathcal{D}_{S,x}}f(x')\right)\\
  &=\sum_{S\subseteq U \setminus \{i\}} w(S)\cdot 2^{|S|-k}
  \left(\sum_{x'\in \mathcal{D}_{S\cup\{i\},x}}f(x')-f({x'}^{\neg i})\right).\\
\end{align*}

By the monotonicity of $f$, we get:
\begin{align*}
  |\omega_i^w(x)|&=\sum_{S\subseteq U \setminus \{i\}} w(S)\cdot 2^{|S|-k}
  \left(\sum_{x'\in \mathcal{D}_{S\cup\{i\},x}}|f(x')-f({x'}^{\neg i})|\right).\\
\end{align*}

Now we are ready to compute $\Omega_i^w$.
By changing the order of summation, we get:
\begin{align*}
  \Omega_i^w &= \sum_{x\in \mathcal{D}} |\omega_i^w(x)|\\
  &=\sum_{S\subseteq U \setminus \{i\}} w(S)\cdot 2^{|S|-k}
  \sum_{x\in \mathcal{D}}\left(\sum_{x'\in \mathcal{D}_{S\cup\{i\},x}}|f(x')-f({x'}^{\neg i})|\right)\\
  &=\sum_{S\subseteq U \setminus \{i\}} w(S)\cdot 2^{|S|-k}
  \sum_{x'\in \mathcal{D}}|f(x')-f({x'}^{\neg i})|\cdot |\{x:x'\in \mathcal{D}_{S\cup\{i\},x}\}|
\end{align*}

Now note that for a fixed $x'\in \mathcal{D}$, $x'$ is present in the set $\mathcal{D}_{S\cup\{i\},x}$
for $2^{k-|S|-1}$ distinct points $x\in \mathcal{D}$. Hence:
\begin{align*}
  \Omega_i^w &=\sum_{S\subseteq U \setminus \{i\}} w(S)\cdot 2^{|S|-k}
  \sum_{x'\in \mathcal{D}}|f(x')-f({x'}^{\neg i})|\cdot 2^{k-|S|-1}\\
  &=\frac{1}{2}\sum_{S\subseteq U \setminus \{i\}} w(S) \cdot \left(\sum_{x\in \mathcal{D}}|f(x)-f({x}^{\neg i})|\right)\\
  &=\frac{1}{2} \left(\sum_{x\in \mathcal{D}}|f(x)-f({x}^{\neg i})|\right) \cdot \left(\sum_{S\subseteq U \setminus \{i\}} w(S)\right) \\
  &=\frac{1}{2} \sum_{x\in \mathcal{D}}|f(x)-f({x}^{\neg i})|.
\end{align*}

The last equality follows since the coefficients $w(S)$ for $S\subseteq 2^U$ add up to $1$.
Thus $\Omega_{i}^w$ is independent of $w$.
It follows that $\Omega_i^w=\Phi_i=B_i$ for all $i\in U$, which completes the proof.

\section{Experimental Results}
\subsection{Datasets}
In our experiments, we used six datasets: four real and two artificial. These datasets were obtained by running either the sklearn implementation of Decision Trees (DT) or xgboost implementation of Gradient Boosting Decision Trees (GBDT) on three predictions datasets. These are some of the most popular algorithms for generating decision trees and are quite often used for large depths of the trees. Using large-depth trees is particularly beneficial for datasets with many features and complex relationship between features (see e.g., \cite{Bordag2021.04.18.21254782,Pham2019MultimodalDO} for a usage of trees of depth 100).  Let us emphasize that the large depth of the tree e.g. height 100 does not mean that the size of the tree is $2^{100}$, because trees might be (and usually are) unbalanced. To simplify the experiments and reduce the running times of experiments we trained the DT algorithm for only one tree.  The algorithm's parameters and the basic dataset's descriptions are as follows:
\begin{enumerate}[leftmargin=*]
  \item \bos{}~\cite{bostondataset}. This small prediction dataset contains information concerning housing in the
area of Boston Massachusetts. The task is to predict the price of the house. The dataset
    contains 506 rows and 13 columns. The decision tree (DT) was trained with tree\_depth equal to 10, all of the other parameters are set set to default values.
    The parameters used for training xgboost were: 100 iterations, max depth 6, and learning rate equal to 0.01.

 \item \nh{}. The same dataset that was used in previous work on model interpretability \cite{Lundberg2020}. The
dataset contains 8023 rows and 79 columns. The parameters used for training were the
same as in the original paper. The DT algorithm was trained with tree\_depth equal to 40, all of the other parameters are set to default values. The parameters used for training xgboost were: 250 iterations, max depth 4 and learning rate equal to 0.2.

 \item \hi{}~\cite{healthdataset}. A medium size dataset for predicting who might be
interested in health insurance. The dataset contains 304887 rows and 14 columns.
The DT algorithm was trained with tree\_depth equal to 60, all of the other parameters are set set to default values. The parameters used for training xgboost were: 250 iterations, max depth 4 and learning rate
equal to 0.2.

\item \fl{}~\cite{flightsdataset}. A large dataset for predicting the flights' delays. The
dataset contains 1543718 rows and 647 columns. The large number of columns was
caused by one-hot encoding 'UniqueCarrier', 'Origin', 'Dest',
    'CancellationCode' in a standard way, i.e., for each possible value $v$ of a given column $c$ we created additional categorical column $c\_v$ (with a value from $\{0,1\}$) indicating that the value of $c$ equals $v$ iff the value of $c\_v$ equals~$1$. The DT algorithm was trained with tree\_depth equal to 100, all of the other parameters are set set to default values.  The parameters used for training xgboost
    were: 250 iterations, max depth 10, and learning rate~0.2.
    
   \end{enumerate}

The algorithms were not extensively tuned since the main goal is interpreting
models not optimizing them.  The large trees are used mainly for comparing running times and elucidate numerical problems. We will refer to the above
datasets by adding ``DT'' and ``GBDT'' suffixes to the ordinal name of the prediction dataset.

We also prepared two synthetic instances with known exact answers (for both Shapley and Banzhaf values):
\begin{enumerate}[leftmargin=*]
  \item \synd{}. This instance contains one tree and one data point $x=[1,\dots,1]\in \mathbb{R}^d$. The tree consists of two subtrees of the same shape, both of them being full binary trees of depth $d-1$. All values in the leaves are equal to $0$ and $777$ in left and right subtree respectively. All leaves have coverages equal to $33$. In the internal nodes at depth $i$ of the tree we split on the condition $x_{d-i}< 1$ where
    the set of features is $U=\{1,\ldots,d\}$.

  \item \syns{}. This instance differs from the dense version only with the shape of the subtrees. Here each inner node of subtree has one leaf child and one non-leaf child.
\end{enumerate}

The only feature with a nonzero Shapley/Banzhaf value is the feature $d$ used to split at depth $0$. 
All the other features have zero Shapley/Banzhaf values by the \emph{Sensitivity} axiom.
\subsection{Algorithms}
In our experiments, we have tested four implementations of algorithms:
\begin{itemize}
  \item \shapours{} -- our implementation of the $O(TLD^2)$-time algorithm computing Shapley values based on~\cite{Lundberg2020},
  \item \shapfast{} -- an implementation of our asymptotically faster $O(TLD)$-time Shapley values algorithm,
  \item \shaporig{} -- implementation of the \texttt{TREESHAP\_PATH} algorithm~\cite{Lundberg2020} from the SHAP package~\cite{shappackage},
  \item \ban{} -- an implementation of our $O(TL)$-time algorithm computing Banzhaf values for tree ensemble models.
\end{itemize}
The implementations \shapours{} and \shapfast{} are consistent,
i.e., produce the same results.
There are small differences between
\shaporig{} and our implementations of Shapley values
but the mean average percentage difference between values is less than $1\%$. We suspect
that these differences are due to numerical differences in implementations.

\begin{figure}[h!bt]
 \centering
\includegraphics[width=0.5\linewidth,height=5cm]{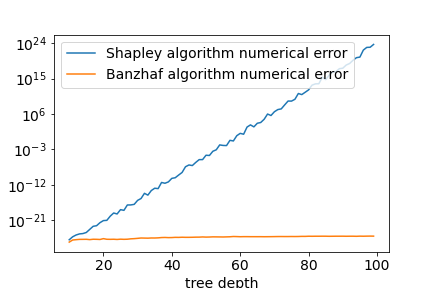}
  \caption{The numerical error for \texttt{SYNTHETIC\_SPARSE}. One can see that if the tree depth is more than apx. 50, the error makes the current implementation of Shapley values unusable. Moreover, the error grows exponentially in the depth of the tree.}
\label{fig:numerical}
\end{figure}

\subsection{Numerical Accuracy}
\label{section:numerical}
We observe that the algorithms for computing Shapley values -- both our version (in both variants with complexities
$O(TLD^2)$ and $O(TLD)$ resp.) and the original one (as implemented in the SHAP package~\cite{shappackage}) -- suffer numerical problems for large trees. To show that, we consider a simple artificial tree for which we know the answer for both Shapley values and Banzhaf values, namely the \texttt{SYNTHETIC\_SPARSE} instance.
We have observed that for trees of depth apx. 50 errors dominate completely the results, i.e., errors become apx.~1.
In Figure \ref{fig:numerical} we visualise the numerical errors
for Shapley values and Banzhaf values in the case of \texttt{SYNTHETIC\_SPARSE} instance.

For more realistic settings we observe that numerical errors that can alter results even for smaller trees used in practice, e.g., there are differences in the ordering of the first 10 features for different Shapley-based implementations on the \nhgb{} dataset. Hence, usage of Shapley values in practice requires additional care to control these errors, as otherwise one
is not guaranteed to obtain correct results.

\subsection{Comparison of Global Impacts}
Now we present a comparison between \ban{} and \shaporig{}.
In Figures~\ref{fig:boston},~\ref{fig:health},~\ref{fig:nhanes},\ref{fig:flights}~\ref{fig:bostondt},~\ref{fig:healthdt},~\ref{fig:nhanesdt},~and~\ref{fig:flightsdt} we show the global impacts of features, as defined in the previous section.
We confirm that the differences in this case are very small for small trees and visibly larger for large trees.
However, one can observe differences for specific data points. In Table~\ref{tab:cayley}, we present the average Cayley distance between the feature orderings
derived from the computed Shapley and Banzhaf values. In particular, when many features are used by the model, the difference in orderings becomes larger, as 
individual values become smaller. 

\begin{table}[h!]
  {
  \begin{center}
  \scalebox{0.85}{
\begin{tabular}{ |c|c|c|c|c| }
 \hline
 Ins/n & 3 & 10 & 20 \\
\hline
BOS\_GBDT &  0.02 &  1.05 &    \\
\hline
NH\_GBDT & 0.01& 0.34  & 1.53     \\
\hline
HI\_GBDT & 0.02& 0.73  &      \\
\hline
FL\_GBDT & 0.23  & 3.08 &  8.63    \\
\hline
BOS\_DT &  0.08 &  1.73 &    \\
\hline
NH\_DT & 0.33 & 4.33  & 11.62   \\
\hline
HI\_DT & 1.10 & 6.79  &      \\
\hline
FL\_DT & 0.84  & 6.39 &   14.11\\
\hline
\end{tabular}}
\vspace{0.1cm}
  \caption{The average modified Cayley distance for the $n$ most important features for $n\in \{3,10,20\}$ between Shapley and Banzhaf values. The Cayley distance measures the number of swaps needed to switch from one permutation to another. In our modified version, we also support the case where the sets of considered most important features in the respective permutations are different. For a missing feature, we add it at the end of the permutation.}\label{tab:cayley}
\end{center}
  }
\end{table}

\begin{figure}[h!t]
 \centering
\begin{subfigure}[t]{.23\textwidth}
\includegraphics[width=\linewidth,height=3.5cm]{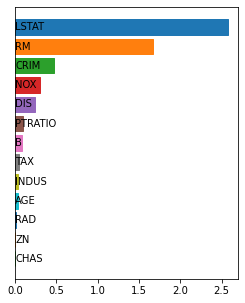}
  \caption{Original Shapley values.}
\end{subfigure}
\begin{subfigure}[t]{.23\textwidth}
  \includegraphics[width=\linewidth,height=3.5cm]{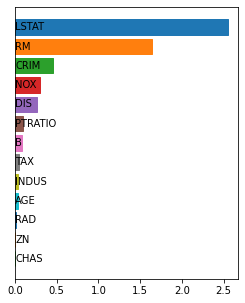}
  \caption{Banzhaf values.}
\end{subfigure}
  \caption{The global impacts of the individual features for the \bosgb{} dataset. We observe that the plots are indistinguishable.}
\label{fig:boston}
\end{figure}
\begin{figure}[h!t]
 \centering
\begin{subfigure}[t]{.23\textwidth}
\includegraphics[width=\linewidth,height=4cm]{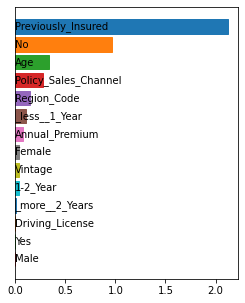}
  \caption{Original Shapley values.}
\end{subfigure}
\begin{subfigure}[t]{.23\textwidth}
\includegraphics[width=\linewidth,height=4cm]{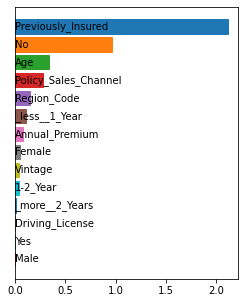}
  \caption{Banzhaf values.}
\end{subfigure}
\caption{The features' global impacts for the \higb{} dataset. We observe that the plots are indistinguishable.}
\label{fig:health}
\end{figure}
\begin{figure}[h!t]
 \centering
\begin{subfigure}[t]{.23\textwidth}
\includegraphics[width=\linewidth,height=4cm]{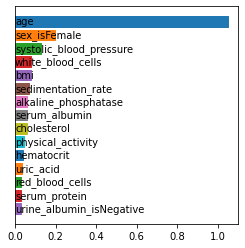}
  \caption{Original Shapley value.}
\end{subfigure}
\begin{subfigure}[t]{.23\textwidth}
\includegraphics[width=\linewidth,height=4cm]{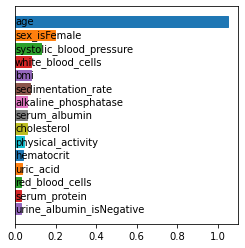}
  \caption{Banzhaf values.}
\end{subfigure}
  \caption{The global impacts of the individual features for the \nhgb{} dataset. We observe that the plots are indistinguishable.}
\label{fig:nhanes}
\end{figure}
\begin{figure}[!ht]
 \centering
\begin{subfigure}[t]{.23\textwidth}
\includegraphics[width=\linewidth,height=4.2cm]{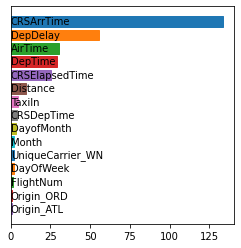}
  \caption{Original Shapley value.}
\end{subfigure}
\begin{subfigure}[t]{.23\textwidth}
\includegraphics[width=\linewidth,height=4.2cm]{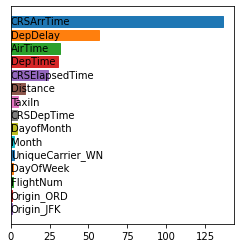}
  \caption{Banzhaf values.}
\end{subfigure}
\caption{The global impacts of the individual features for the \flgb{} dataset. We observe small differences in the ordering.}
\label{fig:flights}
\end{figure}

\begin{figure}[h!t]
 \centering
\begin{subfigure}[t]{.23\textwidth}
\includegraphics[width=\linewidth,height=3.5cm]{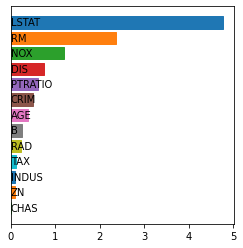}
  \caption{Original Shapley values.}
\end{subfigure}
\begin{subfigure}[t]{.23\textwidth}
\includegraphics[width=\linewidth,height=3.5cm]{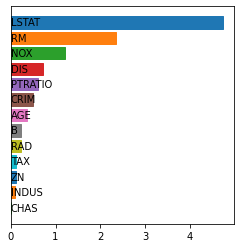}
  \caption{Banzhaf values.}
\end{subfigure}
  \caption{The global impacts of the individual features for the \bosdt{} dataset. We observe minor differences between plots.}
\label{fig:bostondt}
\end{figure}
\begin{figure}[h!t]
 \centering
\begin{subfigure}[t]{.23\textwidth}
\includegraphics[width=\linewidth,height=4cm]{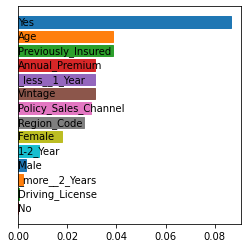}
  \caption{Original Shapley values.}
\end{subfigure}
\begin{subfigure}[t]{.23\textwidth}
\includegraphics[width=\linewidth,height=4cm]{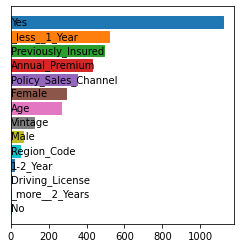}
  \caption{Banzhaf values.}
\end{subfigure}
\caption{The features' global impacts for the \hidt{} dataset. We observe . We observe major differences between plots..}
\label{fig:healthdt}
\end{figure}
\begin{figure}[h!t]
 \centering
\begin{subfigure}[t]{.23\textwidth}
\includegraphics[width=\linewidth,height=4cm]{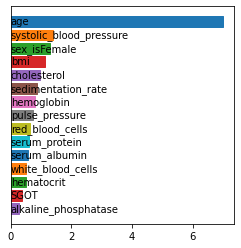}
  \caption{Original Shapley value.}
\end{subfigure}
\begin{subfigure}[t]{.23\textwidth}
\includegraphics[width=\linewidth,height=4cm]{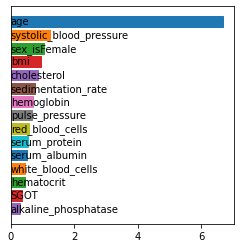}
  \caption{Banzhaf values.}
\end{subfigure}
  \caption{The global impacts of the individual features for the \nhdt{} dataset. We observe that the plots are indistinguishable.}
\label{fig:nhanesdt}
\end{figure}
\begin{figure}[!ht]
 \centering
\begin{subfigure}[t]{.23\textwidth}
\includegraphics[width=\linewidth,height=4.2cm]{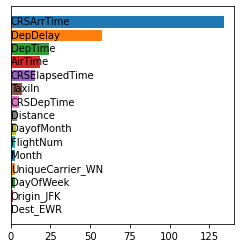}
  \caption{Original Shapley value.}
\end{subfigure}
\begin{subfigure}[t]{.23\textwidth}
  \includegraphics[width=\linewidth,height=4.2cm]{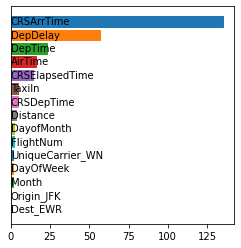}
  \caption{Banzhaf values.}
\end{subfigure}
\caption{The global impacts of the individual features for the \fldt{} dataset. We observe minor differences in the ordering.}
\label{fig:flightsdt}
\end{figure}

\subsection{Feature Values' Comparision}\label{s:detailed}
In order to pinpoint the reason why explanations for these models differ we have looked at
specific instances. In particular, we present a detailed comparison of feature importances
for individual predictions produced by the implemented algorithms. 
Figures~\ref{fig:bostonl},~\ref{fig:nhanesl},~\ref{fig:hil},~\ref{fig:flightsl}
show the MAE (Mean Average Error) and RMSE (Root Mean Square Error) between the respective Shapley and Banzhaf
values for each individual feature through all the data points in the data sets
\bosgb{}, \nhgb{},~\higb{},~and~\linebreak \flgb{}.
Formally, for each dataset $\mathcal{D}$ out of those and each feature $i$ used there, these are defined as:

\[\text{MAE}_i= \frac{1}{|\mathcal{D}|}\sum_{x\in\mathcal{D}}|\phi_i(x) - \beta_i(x)| \qquad
\text{RMSE}_i = \sqrt{\frac{1}{|\mathcal{D}|}\sum_{x\in\mathcal{D}}(\phi_i(x)^2 - \beta_i(x))^2}\]

Above, $\phi_i(x)$ denotes the Shapley explanation of $f(x)$ for data point $x\in\mathcal{D}$, as computed by \shaporig{}.
Similarly, $\beta_i(x)$ denotes the Banzhaf explanation of $f(x)$ for $x\in\mathcal{D}$ as computed by \ban{}.

We observe that errors in 
Figures~\ref{fig:bostonl},~\ref{fig:nhanesl},~\ref{fig:hil},~\ref{fig:flightsl}
lie in the range of few small percent in relation to the possible ranges of global impacts. These errors indicate that when looking at specific data points
one should expect only some differences in ordering of the features and only for features with similar scores. 
However, this is the case only for the GBDT datasets, whereas when the DT algorithm is used these errors become dominating. 
As these plots do not seem to contain any valuable information, we have decided not to include them. 
This is further illustrated in the next figures which present scores for specific data-points.

\begin{figure}[!h]
 \centering
\begin{subfigure}[t]{.41\textwidth}
\includegraphics[width=\linewidth]{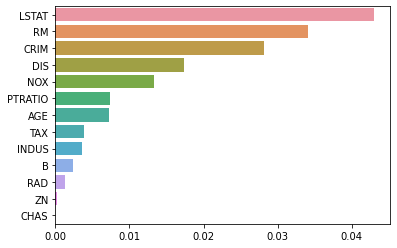}
  \caption{MAE error.}
\end{subfigure}
\hspace*{2mm}
\begin{subfigure}[t]{.41\textwidth}
\includegraphics[width=\linewidth]{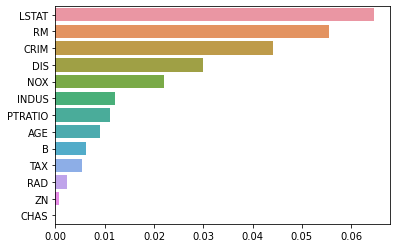}
  \caption{RMSE error.}
\end{subfigure}
\caption{The MAE and RMSE error between Banzhaf and Shapley values for the \bosgb{} dataset.}
\label{fig:bostonl}
\end{figure}
\begin{figure}[!h]
 \centering
\begin{subfigure}[t]{.48\textwidth}
\includegraphics[width=\linewidth]{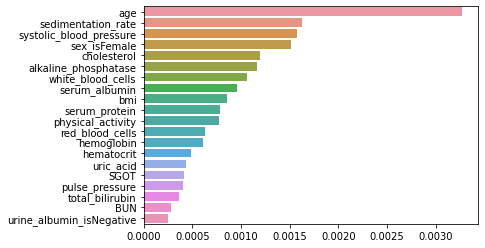}
  \caption{MAE error.}
\end{subfigure}
\begin{subfigure}[t]{.48\textwidth}
\includegraphics[width=\linewidth]{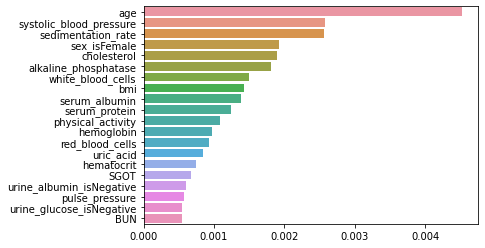}
  \caption{RMSE error.}
\end{subfigure}
\caption{The MAE and RMSE error between Banzhaf and Shapley values for the \nhgb{} dataset.}
\label{fig:nhanesl}
\end{figure}
\begin{figure}[!h]
 \centering
\begin{subfigure}[t]{.48\textwidth}
\includegraphics[width=\linewidth]{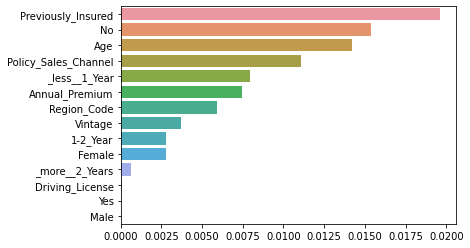}
  \caption{MAE error.}
\end{subfigure}
\begin{subfigure}[t]{.48\textwidth}
\includegraphics[width=\linewidth]{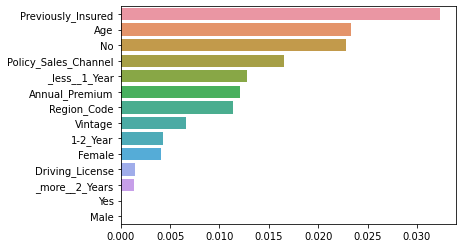}
  \caption{RMSE error.}
\end{subfigure}
\caption{The MAE and RMSE error between Banzhaf and Shapley values for the \higb{} dataset.}
\label{fig:hil}
\end{figure}
\begin{figure}[!h]
 \centering
\begin{subfigure}[t]{.48\textwidth}
\includegraphics[width=\linewidth]{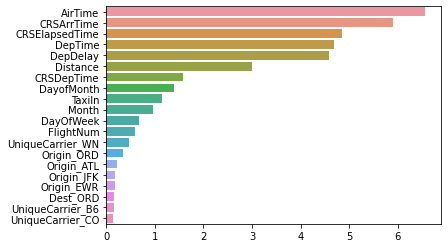}
  \caption{MAE error.}
\end{subfigure}
\begin{subfigure}[t]{.48\textwidth}
\includegraphics[width=\linewidth]{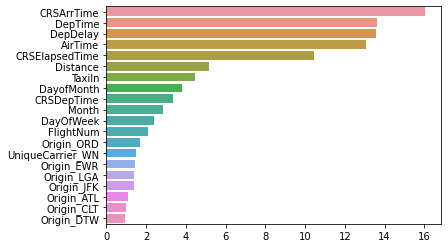}
  \caption{RMSE error.}
\end{subfigure}
\caption{The MAE and RMSE error between Banzhaf and Shapley values for the \flgb{} dataset.}
\label{fig:flightsl}
\end{figure}

In Figures \ref{fig:bostongb_bad},~\ref{fig:nhanesgb_bad}
,~\ref{fig:higb_bad},~\ref{fig:flightsgb_bad},~\ref{fig:boston_bad},~\ref{fig:nhanes_bad}
,~\ref{fig:hi_bad},~\ref{fig:flights_bad} we present ``bad'' examples for which the order of important features differs the most for Banzhaf and Shapley values for large trees.
For small trees, differences do not change the overall interpretations, i.e., the ordering
of the top features remains essentially the same, whereas for large trees we observe major differences in feature importance.
In Figure \ref{fig:hi_bad}, one can see larger differences in values, e.g., there is a difference in the most important value.
These results seem to seemingly falsify Hypothesis~\ref{hyp-1}. However, as we observed in Section~\ref{section:numerical} that
numerical errors can be dominating in Shapley values computations. This is visualized on 
Figure \ref{fig:hi_bad_shap} that visualizes differences between different implementations of Shapley values. 
Hence, the numerical errors can make specific Shapley explanations unusable for DT algorithms. 

\begin{figure}[!h]
 \centering
\begin{subfigure}[t]{.23\textwidth}
\includegraphics[width=\linewidth]{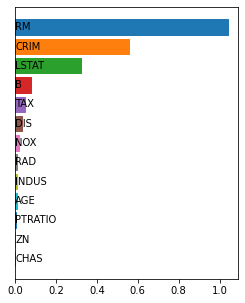}
  \caption{Original shapley value.}
\end{subfigure}
\begin{subfigure}[t]{.23\textwidth}
\includegraphics[width=\linewidth]{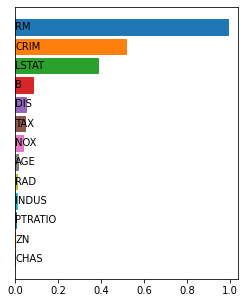}
  \caption{Banzhaf values.}
\end{subfigure}
\caption{A ``bad'' example: Banzhaf and Shapley values for a data point for which they differ the most on the \bosgb{} dataset.}
\label{fig:bostongb_bad}
\end{figure}

\begin{figure}[!h]
 \centering
\begin{subfigure}[t]{.23\textwidth}
\includegraphics[width=\linewidth]{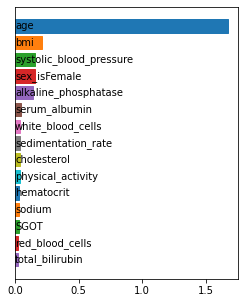}
  \caption{Original shapley value.}
\end{subfigure}
\begin{subfigure}[t]{.23\textwidth}
\includegraphics[width=\linewidth]{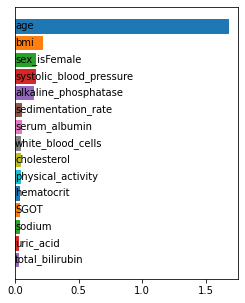}
  \caption{Banzhaf values.}
\end{subfigure}
\caption{A ``bad'' example: Banzhaf and Shapley values for a data point for which they differ the most on the \nhgb{} dataset.}
\label{fig:nhanesgb_bad}
\end{figure}

\begin{figure}[!h]
 \centering
\begin{subfigure}[t]{.23\textwidth}
\includegraphics[width=\linewidth]{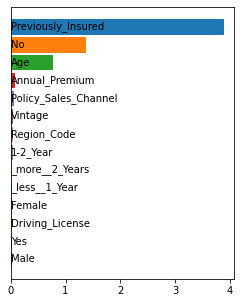}
  \caption{Original shapley value.}
\end{subfigure}
\begin{subfigure}[t]{.23\textwidth}
\includegraphics[width=\linewidth]{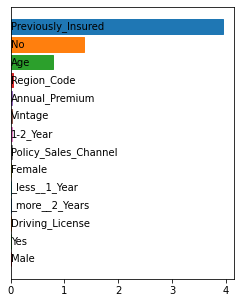}
  \caption{Banzhaf values.}
\end{subfigure}
\caption{A ``bad'' example: Banzhaf and Shapley values for a data point for which they differ the most on the \higb{} dataset.}
\label{fig:higb_bad}
\end{figure}

\begin{figure}[!h]
 \centering
\begin{subfigure}[t]{.23\textwidth}
\includegraphics[width=\linewidth]{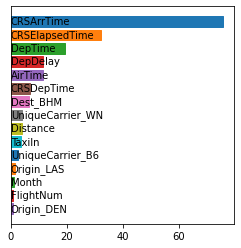}
  \caption{Original shapley value.}
\end{subfigure}
\begin{subfigure}[t]{.23\textwidth}
\includegraphics[width=\linewidth]{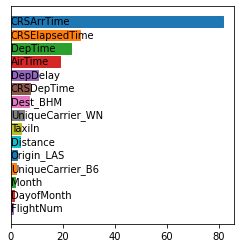}
  \caption{Banzhaf values.}
\end{subfigure}
\caption{A ``bad'' example: Banzhaf and Shapley values for a data point for which they differ the most on the \flgb{} dataset.}
\label{fig:flightsgb_bad}
\end{figure}

\begin{figure}[!h]
 \centering
\begin{subfigure}[t]{.23\textwidth}
\includegraphics[width=\linewidth]{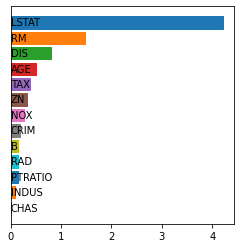}
  \caption{Original shapley value.}
\end{subfigure}
\begin{subfigure}[t]{.23\textwidth}
\includegraphics[width=\linewidth]{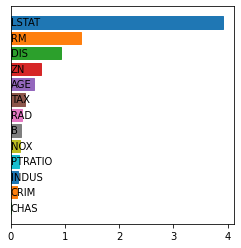}
  \caption{Banzhaf values.}
\end{subfigure}
\caption{A ``bad'' example: Banzhaf and Shapley values for a data point for which they differ the most on the \bosdt{} dataset.}
\label{fig:boston_bad}
\end{figure}

\begin{figure}[!h]
 \centering
\begin{subfigure}[t]{.23\textwidth}
\includegraphics[width=\linewidth]{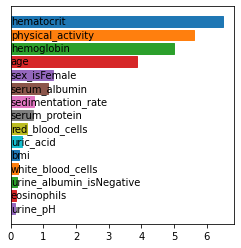}
  \caption{Original shapley value.}
\end{subfigure}
\begin{subfigure}[t]{.23\textwidth}
\includegraphics[width=\linewidth]{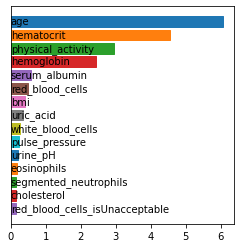}
  \caption{Banzhaf values.}
\end{subfigure}
\caption{A ``bad'' example: Banzhaf and Shapley values for a data point for which they differ the most on the \nhdt{} dataset. We observe large differences caused by numerical errors.}
\label{fig:nhanes_bad}
\end{figure}

\begin{figure}[!h]
 \centering
\begin{subfigure}[t]{.23\textwidth}
\includegraphics[width=\linewidth]{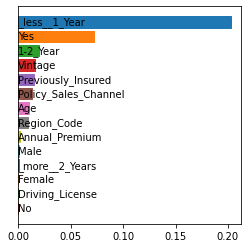}
  \caption{Original shapley value.}
\end{subfigure}
\begin{subfigure}[t]{.23\textwidth}
\includegraphics[width=\linewidth]{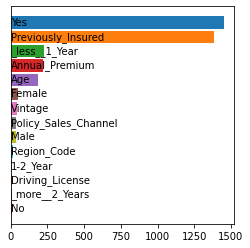}
  \caption{Banzhaf values.}
\end{subfigure}
\caption{A ``bad'' example: Banzhaf and Shapley values for a data point for which they differ the most on the \hidt{} dataset. We observe large differences caused by numerical errors.}
\label{fig:hi_bad}
\end{figure}

\begin{figure}[!h]
 \centering
\begin{subfigure}[t]{.23\textwidth}
\includegraphics[width=\linewidth]{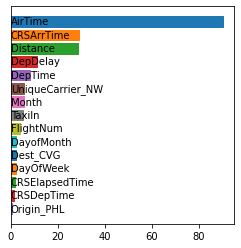}
  \caption{Original shapley value.}
\end{subfigure}
\begin{subfigure}[t]{.23\textwidth}
\includegraphics[width=\linewidth]{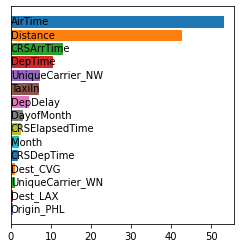}
  \caption{Banzhaf values.}
\end{subfigure}
\caption{A ``bad'' example: Banzhaf and Shapley values for a data point for which they differ the most on the \fldt{} dataset. We observe large differences caused by numerical errors.}
\label{fig:flights_bad}
\end{figure}

\begin{figure}[!h]
 \centering
\begin{subfigure}[t]{.21\textwidth}
\includegraphics[width=\linewidth]{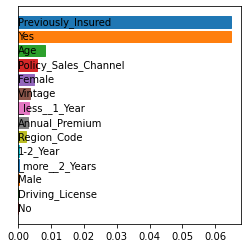}
  \caption{Original Shapley values.}
\end{subfigure}
\begin{subfigure}[t]{.21\textwidth}
\includegraphics[width=\linewidth]{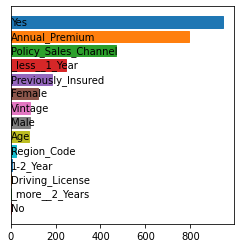}
  \caption{Original Shapley values alternative implementation.}
\end{subfigure}
\caption{A ``bad'' example: comparing two implementations of Shapley values: \shaporig{} and \shapours{} for a data point for which they differ the most on the \hidt{} dataset. We observe large differences caused by numerical errors.}
\label{fig:hi_bad_shap}
\end{figure}

\subsection{Comparison of Running Times}
In this section, we compare the running times of the algorithms. In Table~\ref{tab:running_times} we show the running times for different examples. It can be seen
that \ban{} is faster than all other methods, and using it can lead to considerable time savings for larger data-sets. For small depths,
as used in these examples, \shapours{} and \shapfast{} essentially run in the same time, whereas \shaporig{} is faster due to different implementation.

However, this changes with the increased depth of the trees as shown in Figure \ref{fig:running_art}. We included our implementation of the original algorithm of Lundberg et al. (\shapours{}) to avoid unfair comparison, e.g., differences in efficiency of data structures and preprocessing. We observe that the asymptotically faster (by a factor of $D$) versions of the algorithms for computing Shapley values and Banzhaf values are orders of magnitudes faster than the previous versions.
\begin{table}[h!]
\begin{center}
    \scalebox{0.85}{
\begin{tabular}{ |c|c|c|c|c| }
 \hline
  Ins &\ban{}&\shapfast{}&\shaporig{}&\shapours{} \\
 \hline
 BS\_GBDT &  0.48 s &  0.56 s &  0.63 s &  0.70 s \\
\hline
HI\_GBDT &  23.63 s &  51.73 s &  1 m 9 s &  35.32 s \\
\hline
NH\_GBDT&  50.20 s &  2 m 28 s &  2 m 56 s &  1 m 28 s \\  
\hline
FL\_GBDT &  13 m 18 s &  1 h 47 m &  1 h 50 m &  48 m 8 s \\
\hline
BS\_DT &  0.41 s &  0.42 s  &  0.41 s&  0.42 s \\
\hline
 NH\_DT&  3.57 s&  34.92 s  &  42.87 s&  45.58 s  \\
\hline
HI\_DT &  4 m 55 s&  23 m 18 s&  30 m 55 s &  35 m 3 s   \\
\hline
FL\_DT &  14 m 28 s  &  5 h 23 m &  5 h 9 m&  5 h 40 m \\
\hline
\end{tabular}}
\vspace{0.1cm}
 \caption{The comparison of running times for different algorithms. We observe that Banzhaf values implementation (\texttt{ban}) is substantially faster than all the Shapley value implementations on each instance. }
  \label{tab:running_times}
\end{center}
\end{table}

\vspace{0mm}
\begin{figure}[!h]
 \centering
\includegraphics[width=0.55\linewidth,height=6cm]{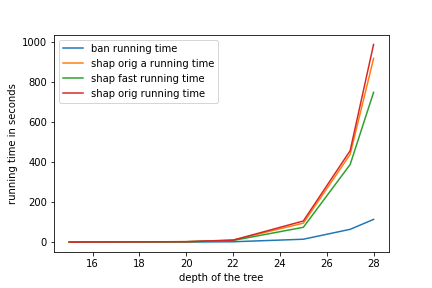}
  \caption{The comparison of running times for different tree sizes for the \texttt{SYNTHETIC\_DENSE} instance.}
\label{fig:running_art}
\end{figure}

\subsection{Experimental Setup}
All of the experiments were done using Intel(R) Xeon(R) CPU E5-2630 v4 @ 2.20GHz with 512 Gb of RAM using only one thread for computation. The operating system was Ubuntu 18.04.2 LTS.
The binaries were compiled using clang version 6.0.0-1ubuntu2 with -O3 optimization.

\section{The Basic Algorithm for Shapley and Banzhaf Values}\label{a:simple}
In this section we describe in detail a variant of the ${O(TLD^2+n)}$ time
\texttt{TREESHAP\_PATH} algorithm
of Lundberg et al.~\cite{Lundberg2020}. We also
show how it is adjusted to compute the Banzhaf values instead.
Recall that
$T$ denotes the number of trees in the ensemble, and $L$ and $D$ bound the numbers
of leaves and the depth of each of these trees, respectively.
We first consider the case $T=1$ (i.e., that the ensemble consists of
a single tree $\tr$) and then (in Section~\ref{s:multiple}) we show how
to handle larger tree ensembles.

For some non-root node $v$ of the tree,
denote by $z_v$ the feature in the parent node $p_v$ of $v$,
i.e., $z_v=d_{p_v}$.
Recall that $F_v$ denotes the set of features in the
ancestors of $v$, excluding the feature in $v$.
So we have $F_\rho=\emptyset$ and $F_v=F_{p_v}\cup \{z_v\}.$

Recall that our ultimate goal is to compute for each $i\in U$:
\begin{enumerate}
  \item Shapley values defined as:
    \begin{equation*}
      \phi_i=\frac{1}{n}\sum_{S\subseteq U\setminus\{i\}} {\binom{n-1}{|S|}}^{-1}\left(g(S\cup\{i\})-g(S)\right).
    \end{equation*}
  \item Banzhaf values defined as:
    \begin{equation*}
      \beta_i=\frac{1}{2^{n-1}}\sum_{S\subseteq U\setminus\{i\}}\left(g(S\cup \{i\})-g(S)\right).
    \end{equation*}
\end{enumerate}
where $g$ is defined to be the output of Algorithm~\ref{alg:est}. We can also write:
\begin{equation}\label{eq:gdef}
  g(S):=\sum_{l\in\lvs(\tr)} P[l,S]\cdot f(l),
\end{equation}
where the values $P[\cdot,\cdot]$ are defined using Algorithm~\ref{alg:est}.
More precisely, if Algorithm~\ref{alg:est} is executed with subset $S\subseteq U$,
then $P[v,S]$ for each $v\in\tr$ equals the weight
that the ancestor recursive calls
assign to the subtree rooted at $v$.
Formally, $P[\rho,S]=1$, and for any $v\neq \rho$,
\begin{equation*}
  P[v,S]=\begin{cases}
      P[p_v,S]\cdot \frac{r_v}{r_{p_v}} & \text{ if } z_v\notin S,\\
      P[p_v,S]\cdot [x_{z_v}< t_{p_v}] & \text{ if }  z_v\in S, v=a_{p_v},\\
      P[p_v,S]\cdot [x_{z_v}\geq t_{p_v}] & \text{ if }  z_v\in S, v=b_{p_v}.\\
  \end{cases}
\end{equation*}

Denote by $I_{v,y}$ the minimal interval that $x_{y}$ has to belong to in order to $x$ end up being evaluated to a leaf in the subtree of~$v$,
i.e., so that $f(x)\in \{f(l):l\in \lvs(\tr[v])\}$. In particular, if no ancestor of $v$ stores the feature $y$, we set $I_{v,y}=(-\infty,\infty)$.
We say that $I_{v,y}$ is \emph{non-trivial} if
$I_{v,y}\neq I_{p_v,y}$.
Observe that $I_{v,y}$ can only be non-trivial if
$y=z_v$. Moreover, in that case we have
\begin{equation*}I_{v,y}=\begin{cases}
  I_{p_v,y}\cap (-\infty,t_{p_v}] & v=a_{p_v},\\
  I_{p_v,y}\cap (t_{p_v},\infty) & v=b_{p_v}.
\end{cases}
\end{equation*}
Hence, we obtain the following.
\begin{observation}
  All the non-trivial intervals $I_{v,y}$ for all possible $(v,y)\in \tr\times U$, can be computed in $O(L)$ time.
\end{observation}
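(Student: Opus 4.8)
The plan is to exploit two structural facts already established just above the statement. First, for every non-root node $v$ the only index $y$ for which $I_{v,y}$ can be non-trivial is $y=z_v=d_{p_v}$; hence there is at most one non-trivial interval per non-root node, and the total number of non-trivial intervals is bounded by the number of nodes of $\tr$, which is $O(L)$ since $\tr$ is a binary tree with $L$ leaves. It therefore suffices to show that each such interval can be produced in $O(1)$ amortized time. Second, the recurrence $I_{v,z_v}=I_{p_v,z_v}\cap(-\infty,t_{p_v}]$ (resp.\ $\cap\,(t_{p_v},\infty)$) expresses $I_{v,z_v}$ as the intersection of a single known half-line with $I_{p_v,z_v}$, the interval for the \emph{same} feature $z_v=d_{p_v}$ at the parent. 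So a new non-trivial interval costs $O(1)$ once $I_{p_v,d_{p_v}}$ is available.

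The main task is thus to make $I_{v,d_v}$ --- the current interval for the feature that $v$ splits on --- available in $O(1)$ whenever $v$ is visited. I would do this with a single depth-first traversal maintaining an auxiliary table $\mathrm{cur}$ with $\mathrm{cur}[y]$ equal to $I_{v,y}$ for every feature $y$ on the root-to-$v$ path (and $(-\infty,\infty)$ by default). At an internal node $v$ with $y=d_v$, I read $\mathrm{old}=\mathrm{cur}[y]$ (which is exactly $I_{v,y}$), set $\mathrm{cur}[y]:=\mathrm{old}\cap(-\infty,t_v]$, record this as the non-trivial interval $I_{a_v,y}$ and recurse into $a_v$; upon returning I restore $\mathrm{cur}[y]:=\mathrm{old}$, then set $\mathrm{cur}[y]:=\mathrm{old}\cap(t_v,\infty)$, record $I_{b_v,y}$, recurse into $b_v$, and finally restore $\mathrm{cur}[y]:=\mathrm{old}$. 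Restoring on backtracking guarantees that when control reaches any node, $\mathrm{cur}$ holds precisely the intervals determined by that node's ancestors, so the $O(1)$ update above is correct. Each internal node contributes $O(1)$ work and emits the (at most two) non-trivial intervals of its children, giving $O(L)$ total.

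The one point needing care is ensuring that the per-feature lookups and the initialization of $\mathrm{cur}$ cost $O(L)$ rather than $O(n)$, since feature indices range over $U=\{1,\dots,n\}$ with possibly $n\gg L$. I would resolve this with an $O(L)$-time preprocessing pass that collects the (at most $L-1$) features appearing at internal nodes and relabels them into $\{1,\dots,m\}$ with $m=O(L)$; then $\mathrm{cur}$ is an array of size $m$, initialized to $(-\infty,\infty)$ in $O(L)$ time and supporting $O(1)$ access, so the whole procedure runs in $O(L)$ worst-case time (alternatively, a hash table keyed on feature yields the same bound in expectation). I expect the restore-on-backtrack bookkeeping --- rather than any arithmetic --- to be the only genuinely delicate ingredient, since it is exactly what keeps each of the $O(L)$ intervals computable from just its parent's stored value in constant time.
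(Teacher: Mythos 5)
Your proof is correct and follows essentially the same route as the paper: the observation there rests on exactly the two facts you cite (non-triviality only for $y=z_v$, plus the $O(1)$ recurrence from $I_{p_v,y}$), and the paper's implementation maintains the current intervals in a global array that is updated when each recursive call starts and reverted when it ends --- precisely your restore-on-backtrack DFS. Your additional relabeling pass to avoid an $O(n)$-size table is a refinement the paper sidesteps by absorbing a one-time $O(n)$ initialization into its overall $O(TLD+n)$-style bounds, but it does not change the substance of the argument.
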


Let $y\in U$ be some feature.
Denote by $c_v(y)$ the product of all the values $r_u/r_{p_u}$ such that $u$ is non-root (weak) ancestor of $v$ and $z_u=y$. In particular, if $y\notin F_v$, then $c_v(y)=1$.
Similarly as was the case for the intervals
$I_{v,y}$, a value $c_v(y)$ can only differ
from $c_{p_v}(y)$ if $y=z_v$ (then we have
$c_v(y)=c_{p_v}(y)\cdot r_v/r_{p_v}$). Hence,
all such non-trivial values $c_v(y)$ can be computed
in $O(L)$ time as well.

Let us start by describing the basic algorithm
for Shapley values.
To proceed, we introduce the following crucial notation.
For any set $G\subseteq U$, let
\begin{equation}\label{eq:dpdef}
  \phi(v,G,k):=\frac{1}{|G|+1}\sum_{\substack{S\subseteq G\\|S|=k}} \binom{|G|}{k}^{-1}\cdot P[v,S].
\end{equation}
Roughly speaking, the values $\phi(v,G,k)$ can be computed using dynamic programming. Let us put
\begin{equation*}
  \phi(v,G):=\sum_{k=0}^{|G|}\phi(v,G,k).
\end{equation*}
\subsection{Leaf Contributions}

In this section we show that computing $\phi(l,F_l\setminus\{i\})$
for all pairs ${l\in \lvs(\tr)}$, $i\in F_l$ is sufficient
to get all the sought Shapley values.
In order to prove that, let us first observe some
simple properties of the values $P[\cdot,\cdot]$.

\begin{lemma}\label{c:add_feature}
Let $v\in \tr$ and $Q\subseteq U$ and $y\in U\setminus Q$. Then:
\begin{equation*}
  P[v,Q\cup\{y\}]=P[v,Q]\cdot \frac{[x_y\in I_{v,y}]}{c_v(y)}.
\end{equation*}
\end{lemma}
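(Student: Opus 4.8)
The plan is to prove the identity by induction on the depth of $v$ in $\tr$ (equivalently, on the length of the root-to-$v$ path), peeling off one edge at a time using the recursive definitions of $P[\cdot,\cdot]$, $I_{v,y}$, and $c_v(y)$. The observation driving everything is that $P[v,S]$ factorizes as a product of edge-weights along the root-to-$v$ path, and that replacing $S=Q$ by $S=Q\cup\{y\}$ alters the weight only on those edges whose parent splits on feature $y$, i.e.\ edges $p_u\to u$ with $z_u=y$. It is cleanest to prove the stated equality as an identity (multiplying through) rather than a ratio, which sidesteps any worry about $P[v,Q]$ vanishing, since the $P$-recursion multiplies by strictly positive coverage ratios $r_v/r_{p_v}$ and by $0/1$-valued threshold indicators.

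For the base case $v=\rho$ we have $P[\rho,S]=1$ for every $S$, while $c_\rho(y)=1$ (empty product, as $\rho$ has no non-root weak ancestor) and $I_{\rho,y}=(-\infty,\infty)$, so $[x_y\in I_{\rho,y}]=1$; the claimed identity reduces to $1=1$.

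For the inductive step I would assume the statement for $p_v$ and split into two cases according to whether the edge into $v$ splits on $y$. If $z_v\neq y$, then none of $c_v(y)=c_{p_v}(y)$, $I_{v,y}=I_{p_v,y}$, nor the edge-weight contributed at $v$ changes when $y$ is added to $Q$ (the recursion for $P[v,\cdot]$ branches on membership of $z_v$, not of $y$), so both $P[v,Q\cup\{y\}]$ and $P[v,Q]$ pick up the same factor relative to their parent values, and the identity descends directly from the inductive hypothesis. If $z_v=y$, then adding $y$ to $Q$ switches the edge-weight at $v$ from the coverage ratio $r_v/r_{p_v}$ (the $z_v\notin S$ branch, valid for $Q$ since $y\notin Q$) to the threshold indicator $[x_{z_v}<t_{p_v}]$ or $[x_{z_v}\geq t_{p_v}]$ (the $z_v\in S$ branch, valid for $Q\cup\{y\}$); at the same time $c_v(y)=c_{p_v}(y)\cdot r_v/r_{p_v}$ and $[x_y\in I_{v,y}]=[x_y\in I_{p_v,y}]\cdot[\text{threshold condition}]$ by the recursive definition of $I_{v,y}$. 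Substituting the hypothesis for $p_v$ and simplifying, the factor $r_v/r_{p_v}$ cancels and both sides agree.

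The main thing requiring care is the $z_v=y$ case: one must match the single new threshold indicator produced by the $P$-recursion against exactly the half-line intersected into $I_{v,y}$, and confirm that the coverage ratio appearing in $c_v(y)$ is precisely the one dropped from the $z_v\notin S$ branch for $Q$. Intuitively this just says that, over all path edges splitting on $y$, the product of threshold indicators equals $[x_y\in I_{v,y}]$ while the product of coverage ratios equals $c_v(y)$; the induction is merely the bookkeeping that makes this telescoping precise.
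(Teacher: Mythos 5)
Your proof is correct and follows essentially the same route as the paper's: induction on the depth of $v$, using the facts that $c_v(y)$ and $I_{v,y}$ differ from $c_{p_v}(y)$ and $I_{p_v,y}$ only when $z_v=y$, in which case the ratio $r_v/r_{p_v}$ absorbed into $c_v(y)$ cancels against the coverage-ratio branch of the $P$-recursion. Your case split ($z_v\neq y$ versus $z_v=y$) merely merges two of the paper's three cases ($z_v\notin Q\cup\{y\}$ and $y\neq z_v\in Q$) into one, which is a harmless streamlining since in both the same factor multiplies $P[\cdot,Q]$ and $P[\cdot,Q\cup\{y\}]$ when passing from $p_v$ to $v$.
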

\begin{proof}
  This can be proven by induction on the depth of $v$ in~$\tr$.
  The claim holds obviously for $v=\rho$.
  So suppose $v$ is non-root.
  First recall that if $z_v\neq y$, we have
  $[x_y\in I_{v,y}]={[x_y\in I_{p_v,y}]}$ and $c_v(y)=c_{p_v}(y)$.

  Suppose $z_v\notin Q\cup\{y\}$. Then:
  \begin{align*}
    P[v,Q\cup\{y\}]&=P[p_v,Q\cup\{y\}]\cdot \frac{r_v}{r_{p_v}}\\
                   &=P[p_v,Q]\cdot\frac{[x_y\in I_{v,y}]}{c_v(y)}\cdot
    \frac{r_v}{r_{p_v}}\\
    &=P[v,Q]\cdot\frac{[x_y\in I_{v,y}]}{c_v(y)}.
  \end{align*}
  Otherwise, $z_v\in Q\cup \{y\}$.
  Assume wlog. $v=a_{p_v}$ -- the case $v=b_{p_v}$ is symmetric.
  We have:
  \begin{align*}
    P[v,Q\cup\{y\}]&=P[p_v,Q\cup\{y\}]\cdot [x_{z_v}< t_{z_v}]\\
                   &=P[p_v,Q]\cdot \frac{[x_y\in I_{{p_v},y}]}{c_{p_v}(y)}\cdot [x_{z_v}< t_{z_v}]
  \end{align*}
  If $z_v=y$, then $z_v\notin Q$ and we have
  $c_v(y)=c_{p_v}(y)\cdot \frac{r_v}{r_{p_v}}$ and
  $[x_y\in I_{v,y}]=[x_y\in I_{p_v,y}]\cdot [x_y< t_{z_v}]$.
  So in that
  case
  \begin{align*}
    P[v,Q\cup\{y\}]&=P[p_v,Q]\cdot \frac{[x_y\in I_{v,y}]}{c_v(y)}\cdot\frac{r_{p_v}}{r_v}\\
    &= P[v,Q]\cdot \frac{[x_y\in I_{v,y}]}{c_v(y)}
  \end{align*}
  If on the other hand we have $y\neq z_v\in Q$, then:
  \begin{align*}
    P[v,Q\cup\{y\}]=P[p_v,Q]\cdot \frac{[x_y\in I_{v,y}]}{c_{v}(y)}\cdot [x_{z_v}< t_{z_v}]
    =P[v,Q]\cdot \frac{[x_y\in I_{v,y}]}{c_{v}(y)}.
  \end{align*}
\end{proof}

\begin{lemma}\label{c:move_down}
Let $v\in \tr$ be non-root and $Q\subseteq U\setminus \{z_v\}$. Then
\begin{equation*}
  P[v,Q]=P[p_v,Q]\cdot \frac{c_v(z_v)}{c_{p_v}(z_v)}.
\end{equation*}
\end{lemma}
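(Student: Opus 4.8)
The plan is to observe that this lemma is essentially immediate from the defining recurrence for $P[\cdot,\cdot]$ together with the relationship between $c_v(z_v)$ and $c_{p_v}(z_v)$ already noted just before the statement. The crucial point is the hypothesis $Q\subseteq U\setminus\{z_v\}$, which guarantees that $z_v\notin Q$, so that exactly one branch of the recurrence for $P[v,Q]$ is triggered.

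First I would invoke the recursive definition of $P[v,Q]$. Since $z_v\notin Q$, only the first case applies, giving directly
\[
  P[v,Q]=P[p_v,Q]\cdot\frac{r_v}{r_{p_v}}.
\]
This already matches the shape of the claim, so it remains only to identify the factor $r_v/r_{p_v}$ with the quotient $c_v(z_v)/c_{p_v}(z_v)$.

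Next I would recall the structural property of the values $c_{\cdot}(\cdot)$ established above the lemma: $c_v(y)$ can differ from $c_{p_v}(y)$ only when $y=z_v$, and in that case $c_v(z_v)=c_{p_v}(z_v)\cdot r_v/r_{p_v}$. This holds because the non-root weak ancestors of $v$ are exactly those of $p_v$ together with $v$ itself, and $v$ contributes the single extra factor $r_v/r_{p_v}$ to the product defining $c_v(z_v)$, since $z_v=z_v$ and $v$ is non-root. Rearranging yields $r_v/r_{p_v}=c_v(z_v)/c_{p_v}(z_v)$, and substituting this into the displayed identity completes the argument.

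I do not expect any genuine obstacle here: the statement is a direct rewriting of one case of the recurrence using a previously established identity, and no induction is needed. The only points requiring a moment's care are making sure the first (rather than the second or third) branch of the $P$-recurrence applies -- which is exactly why the hypothesis $Q\subseteq U\setminus\{z_v\}$ is imposed -- and that the quotient $c_v(z_v)/c_{p_v}(z_v)$ is well defined, i.e.\ $c_{p_v}(z_v)\neq 0$, which holds since the coverages $r_u$ are positive.
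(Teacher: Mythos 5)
Your proposal is correct and follows exactly the same route as the paper's (one-line) proof: apply the first case of the $P[\cdot,\cdot]$ recurrence, which is available since $z_v\notin Q$, and identify $r_v/r_{p_v}$ with $c_v(z_v)/c_{p_v}(z_v)$ via the structural fact that $v$ contributes the single extra factor $r_v/r_{p_v}$ to the product defining $c_v(z_v)$. Your write-up simply makes explicit the steps the paper leaves implicit, including the harmless well-definedness remark about $c_{p_v}(z_v)\neq 0$.
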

\begin{proof}
  It is enough to note that $\frac{c_v(z_v)}{c_{p_v}(z_v)}=\frac{r_v}{r_{p_v}}$.
\end{proof}

The following lemma, which we prove later on,
states an intuitive fact that $\phi(v,G)$ does
not depend on the features in $G$ that do not appear
in the ancestors of $v$.

\begin{lemma}\label{l:sum}
Let $v\in \tr$ and $G\subseteq U$. Suppose $y\in U\setminus G\setminus F_v$. Then:
  \begin{equation*}
  \phi(v,G\cup \{y\})=\phi(v,G).
\end{equation*}
\end{lemma}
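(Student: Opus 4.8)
The plan is to reduce the statement to a one-line combinatorial identity about inverse binomial coefficients, after first observing that padding a coalition with the feature $y$ never changes the path weights $P[v,\cdot]$. First I would invoke Lemma~\ref{c:add_feature}. Since $y\notin F_v$, no ancestor of $v$ stores the feature $y$, so by definition $I_{v,y}=(-\infty,\infty)$ and $c_v(y)=1$. Consequently, for every $Q\subseteq G$ (note $y\notin Q$ because $y\notin G$) we get $P[v,Q\cup\{y\}]=P[v,Q]\cdot [x_y\in I_{v,y}]/c_v(y)=P[v,Q]$. This is the only place the hypothesis $y\in U\setminus G\setminus F_v$ is actually used: it guarantees that adding $y$ is ``invisible'' to the subtree weights.

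Next I would rewrite the state value compactly by merging the outer sum over $k$ with the inner sum over $|S|=k$, namely $\phi(v,G)=\frac{1}{|G|+1}\sum_{S\subseteq G}\binom{|G|}{|S|}^{-1}P[v,S]$. Writing $n=|G|$ and noting $|G\cup\{y\}|=n+1$, I would then split the sum defining $\phi(v,G\cup\{y\})$ according to whether $S$ contains $y$. The subsets not containing $y$ are exactly $S'\subseteq G$, contributing $\binom{n+1}{|S'|}^{-1}P[v,S']$; the subsets containing $y$ are $S'\cup\{y\}$ for $S'\subseteq G$, where $|S'\cup\{y\}|=|S'|+1$ and, by the first step, $P[v,S'\cup\{y\}]=P[v,S']$, so they contribute $\binom{n+1}{|S'|+1}^{-1}P[v,S']$. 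Grouping by $S'$ extracts the common factor $P[v,S']$ multiplied by the coefficient $\frac{1}{n+2}\bigl(\binom{n+1}{|S'|}^{-1}+\binom{n+1}{|S'|+1}^{-1}\bigr)$.

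It then remains to verify, for each $0\le k\le n$, the elementary identity $\frac{1}{n+2}\bigl(\binom{n+1}{k}^{-1}+\binom{n+1}{k+1}^{-1}\bigr)=\frac{1}{n+1}\binom{n}{k}^{-1}$. This follows by writing the inverse binomials as $\frac{k!(n+1-k)!}{(n+1)!}$ and $\frac{(k+1)!(n-k)!}{(n+1)!}$, factoring out $\frac{k!(n-k)!}{(n+1)!}$, and using $(n+1-k)+(k+1)=n+2$ to cancel the $\frac{1}{n+2}$; the result simplifies to $\frac{k!(n-k)!}{(n+1)!}=\frac{1}{n+1}\binom{n}{k}^{-1}$, which is precisely the per-$S'$ coefficient appearing in $\phi(v,G)$. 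Matching coefficients term by term yields $\phi(v,G\cup\{y\})=\phi(v,G)$. The only genuine content beyond bookkeeping is this weight identity, which encodes the fact that the Shapley averaging weights are calibrated so that inserting an irrelevant feature into the coalition leaves the averaged contribution invariant. I expect the splitting step to demand the most care, specifically keeping the index shift $k\mapsto k+1$ for the $y$-containing subsets consistent with the change of the normalizing denominator from $|G|+1$ to $|G\cup\{y\}|+1=n+2$.
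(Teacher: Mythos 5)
Your proof is correct, but it takes a (modestly) different route from the paper's. The paper derives Lemma~\ref{l:sum} as a corollary of the dynamic-programming recursion of Lemma~\ref{l:dp}: it writes $\phi(v,G\cup\{y\})=\sum_{k=0}^{|G|+1}\phi(v,G\cup\{y\},k)$, applies Lemma~\ref{l:dp} using $\frac{[x_y\in I_{v,y}]}{c_v(y)}=1$, and reindexes one of the resulting sums so that the coefficients $\frac{|G|+1-k}{|G|+2}+\frac{k+1}{|G|+2}=1$ collapse. You instead bypass Lemma~\ref{l:dp} entirely: you invoke Lemma~\ref{c:add_feature} directly to get $P[v,S'\cup\{y\}]=P[v,S']$ for $S'\subseteq G$, split the subsets of $G\cup\{y\}$ according to membership of $y$, and verify the inverse-binomial identity $\frac{1}{n+2}\bigl(\binom{n+1}{k}^{-1}+\binom{n+1}{k+1}^{-1}\bigr)=\frac{1}{n+1}\binom{n}{k}^{-1}$, which is indeed correct. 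The two arguments share the same mathematical core: your identity is essentially the algebra buried inside the proof of Lemma~\ref{l:dp} (where $\frac{1}{m+1}\binom{m}{k}^{-1}$ is rewritten as $\frac{m-k}{m+1}\cdot\frac{1}{m}\binom{m-1}{k}^{-1}$), and both proofs hinge on the observation that $y\notin F_v$ forces $[x_y\in I_{v,y}]/c_v(y)=1$, which you correctly identify as the only place the hypothesis is used. The difference is organizational: the paper's proof is shorter because the combinatorial work is done once in Lemma~\ref{l:dp}, which is needed anyway to drive the algorithm's state transitions, whereas your proof is self-contained at the cost of redoing that algebra inline, and it operates on the aggregated state $\phi(v,G)$ rather than the per-cardinality values $\phi(v,G,k)$ --- a perfectly legitimate trade, though in the context of this paper the per-$k$ recursion cannot be dispensed with elsewhere.
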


Let us now expand the sum~(\ref{eq:shap}) using~(\ref{eq:gdef})
to pinpoint the
individual contributions of each leaf $l$ into $\phi_i$.
\begin{align*}
  \phi_i&=\frac{1}{n}\sum_{S\subseteq U\setminus\{i\}} {\binom{n-1}{|S|}}^{-1}\left(g(S\cup\{i\})-g(S)\right)\\
  &=\frac{1}{n}\sum_{S\subseteq U\setminus\{i\}} {\binom{n-1}{|S|}}^{-1}\left(\sum_{l\in\lvs(\tr)}f(l)\left(P[l,S\cup\{i\}]-P[l,S]\right)\right)\\
\end{align*}
By subsequently applying Lemma~\ref{c:add_feature},
changing the summation order, using~(\ref{eq:gdef}), and
finally applying Lemma~\ref{l:sum}, we get:
\begin{align*}
  \phi_i&=\frac{1}{n}\sum_{S\subseteq U\setminus\{i\}} {\binom{n-1}{|S|}}^{-1}\left(\sum_{l\in\lvs(\tr)}f(l)\cdot P[l,S]\left(\frac{[x_i\in I_{l,i}]}{c_l(i)}-1\right)\right)\\
  &=\sum_{l\in\lvs(\tr)}f(l)\cdot \left(\frac{[x_i\in I_{l,i}]}{c_l(i)}-1\right)\left(\frac{1}{n}\sum_{S\subseteq U\setminus\{i\}} {\binom{n-1}{|S|}}^{-1}P[l,S]\right)\\
  &=\sum_{l\in\lvs(\tr)}f(l)\cdot \left(\frac{[x_i\in I_{l,i}]}{c_l(i)}-1\right)\cdot \phi(l,U\setminus\{i\})\\
\end{align*}
Since $\left(\frac{[x_i\in I_{l,i}]}{c_l(i)}-1\right)=0$ when $i\notin F_l$,
we actually have:
\begin{equation}\label{eq:leaf-contrib}
  \phi_i=\sum_{\substack{l\in\lvs(\tr)\\i\in F_l}}f(l)\cdot \left(\frac{[x_i\in I_{l,i}]}{c_l(i)}-1\right)\cdot \phi(l,F_l\setminus\{i\}).
\end{equation}

So indeed, Equation~(\ref{eq:leaf-contrib}) provides an $O(LD)$-time
reduction of
the problem of computing
Shapley values to computing the values $\phi(l,F_l\setminus\{i\})$
for all $l\in \lvs(\tr)$ and $i\in F_l$.

\subsection{Dynamic Programming}
In this section we show how the values $\phi(v,G,k)$ can be
computed recursively to avoid iterating through all subsets
in the sum~(\ref{eq:dpdef}).
For convenience let us define $\phi(v,G,k)=0$ for $k<0$ or
$k>|G|$.

\begin{lemma}\label{l:dp}
Let $v\in \tr$, $G\subseteq U$ and $k\in\{0,\ldots,|G|\}$. Let $y\in U\setminus G$. Then:
  \begin{align*}
    \phi(v,G\cup\{y\},k)&=\frac{|G|+1-k}{|G|+2}\cdot\phi(v,G,k)+\frac{k}{|G|+2}\cdot \frac{[x_y\in I_{v,y}]}{c_v(y)}\cdot \phi(v,G,k-1).
  \end{align*}
\end{lemma}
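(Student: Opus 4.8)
The plan is to prove Lemma~\ref{l:dp} directly from the definition~\eqref{eq:dpdef} by splitting the sum over subsets $S \subseteq G \cup \{y\}$ according to whether $y \in S$ or not, and then applying Lemma~\ref{c:add_feature} to the terms containing~$y$. Concretely, I would start by writing out
\[
\phi(v, G\cup\{y\}, k) = \frac{1}{|G|+2} \sum_{\substack{S \subseteq G\cup\{y\}\\ |S|=k}} \binom{|G|+1}{k}^{-1} P[v,S],
\]
using $|G \cup \{y\}| = |G| + 1$ since $y \notin G$, so the prefactor is $1/(|G|+2)$ and the binomial is $\binom{|G|+1}{k}$. The natural partition is: subsets $S$ of size $k$ with $y \notin S$ (these are size-$k$ subsets of $G$) and subsets with $y \in S$ (these are of the form $S' \cup \{y\}$ where $S' \subseteq G$ has size $k-1$).

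The key computational step is handling the $y \in S$ terms. For each such $S = S' \cup \{y\}$ with $S' \subseteq G \setminus \{y\} = G$ (since $y \notin G$) and $|S'| = k-1$, Lemma~\ref{c:add_feature} gives $P[v, S' \cup \{y\}] = P[v, S'] \cdot \frac{[x_y \in I_{v,y}]}{c_v(y)}$, which lets me factor out the constant $\frac{[x_y \in I_{v,y}]}{c_v(y)}$ from the entire $y \in S$ sum. So I would rewrite the full sum as
\[
\phi(v, G\cup\{y\}, k) = \frac{1}{|G|+2}\binom{|G|+1}{k}^{-1}\!\left[\sum_{\substack{S\subseteq G\\|S|=k}} P[v,S] + \frac{[x_y \in I_{v,y}]}{c_v(y)}\sum_{\substack{S'\subseteq G\\|S'|=k-1}} P[v,S']\right].
\]
The remaining work is purely combinatorial bookkeeping on the binomial coefficients.

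The main obstacle, and really the crux of the calculation, is reconciling the binomial normalizations. The inner sums $\sum_{|S|=k, S\subseteq G} P[v,S]$ and $\sum_{|S'|=k-1, S\subseteq G} P[v,S']$ are exactly the unnormalized versions of $\phi(v,G,k)$ and $\phi(v,G,k-1)$, which carry prefactors $\frac{1}{|G|+1}\binom{|G|}{k}^{-1}$ and $\frac{1}{|G|+1}\binom{|G|}{k-1}^{-1}$ respectively. So I would multiply and divide to express each inner sum in terms of $\phi(v,G,\cdot)$ and then simplify the resulting ratios of binomials. The identity I need is $\binom{|G|+1}{k}^{-1}\binom{|G|}{k} = \frac{|G|+1-k}{|G|+1}$ for the $y \notin S$ part and $\binom{|G|+1}{k}^{-1}\binom{|G|}{k-1} = \frac{k}{|G|+1}$ for the $y \in S$ part; combined with the $\frac{|G|+1}{|G|+2}$ factors coming from the mismatch between the $\frac{1}{|G|+2}$ and $\frac{1}{|G|+1}$ normalizations, these collapse exactly to the claimed coefficients $\frac{|G|+1-k}{|G|+2}$ and $\frac{k}{|G|+2}$. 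I expect this arithmetic to go through cleanly once the sums are split correctly; the only care needed is tracking the boundary conventions $\phi(v,G,k)=0$ for $k<0$ or $k>|G|$, which ensure the identity remains valid at the extreme values of~$k$ where one of the two inner sums is empty.
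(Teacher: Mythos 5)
Your proposal is correct and follows essentially the same route as the paper's proof: write out the definition of $\phi(v,G\cup\{y\},k)$, split the sum over $S\subseteq G\cup\{y\}$ by whether $y\in S$, apply Lemma~\ref{c:add_feature} to factor out $\frac{[x_y\in I_{v,y}]}{c_v(y)}$, and reconcile the binomial normalizations via the identities $\binom{|G|+1}{k}^{-1}\binom{|G|}{k}=\frac{|G|+1-k}{|G|+1}$ and $\binom{|G|+1}{k}^{-1}\binom{|G|}{k-1}=\frac{k}{|G|+1}$. The only difference is cosmetic bookkeeping (the paper rewrites the prefactor inside the sums, you factor the inner sums out), so nothing further is needed.
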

\begin{proof}
  Let $m=|G|+1$ and $X=\phi(v,G\cup\{y\},k)$. By Lemma~\ref{c:add_feature} we get:
  \begin{align*}X&=\sum_{\substack{S\subseteq G\cup\{y\}\\|S|=k}}\frac{1}{m+1} \binom{m}{k}^{-1} P[v,S]\\
    &=\sum_{\substack{S\subseteq G\\|S|=k}} \frac{1}{m+1}\binom{m}{k}^{-1} P[v,S]
    +\sum_{\substack{y\in S\subseteq G\cup\{y\}\\|S|=k}} \frac{1}{m+1}\binom{m}{k}^{-1} P[v,S]\\
    &=\sum_{\substack{S\subseteq G\\|S|=k}} \frac{m-k}{m+1} \cdot \frac{1}{m}\binom{m-1}{k}^{-1} P[v,S]
    +\sum_{\substack{S\subseteq G\\|S|=k-1}} \frac{k}{m+1}\cdot\frac{[x_y\in I_{v,y}]}{c_v(y)}\cdot \frac{1}{m}\binom{m-1}{k-1}^{-1} P[v,S]\\
&=\frac{m-k}{m+1}\cdot \phi(v,G,k)+
\frac{k}{m+1}\cdot \frac{[x_y\in I_{v,y}]}{c_v(y)}\cdot \phi(v,G,k-1).\qedhere
\end{align*}
\end{proof}
Now, given the recursive formula of Lemma~\ref{l:dp},
we are ready to prove Lemma~\ref{l:sum}.
\begin{proof}[Proof of Lemma~\ref{l:sum}]
Recall that $y\in U\setminus G\setminus F_v$ and thus
  $\frac{[x_y\in I_{v,y}]}{c_v(y)}=1$. By Lemma~\ref{l:dp}, we have:
\begin{align*}
\phi(v,G\cup \{y\}) &= \sum_{k=0}^{|G|+1}\phi(v,G\cup\{y\},k)\\
                  &=\sum_{k=0}^{|G|+1}\frac{|G|+1-k}{|G|+2}\cdot \phi(v,G,k)
                  +\sum_{k=0}^{|G|+1}\frac{k}{|G|+2}\cdot \phi(v,G,k-1)\\
                  &=\sum_{k=0}^{|G|}\frac{|G|+1-k}{|G|+2}\cdot \phi(v,G,k)
                  +\sum_{k=0}^{|G|}\frac{k+1}{|G|+2}\cdot \phi(v,G,k)\\
                  &=\sum_{k=0}^{|G|}\phi(v,G,k)\\
                  &=\phi(v,G).\qedhere
\end{align*}
\end{proof}
By Lemma~\ref{l:dp}, we can compute all the values of the form $\phi(v,G,\cdot)$
in $O(|G|)$ time given:
\begin{itemize}
  \item either the values ${\phi(v,G\setminus\{y^-\},\cdot)}$ for any $y^-\in G$,
  \item or the values $\phi(v,G\cup\{y^+\},\cdot)$ for any $y^+\in U\setminus G$.
\end{itemize}
To describe our basic algorithm, we need one more simple observation.
\begin{lemma}\label{l:lift}
  Let $v\in \tr$ be a non-root node and let ${Q\subseteq U\setminus \{z_v\}}$. Then, for all $k$,
  \begin{equation*}
    \phi(v,Q,k)=\phi(p_v,Q,k)\cdot\frac{c_v(z_v)}{c_{p_v}(z_v)}.
  \end{equation*}
\end{lemma}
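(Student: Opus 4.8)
The plan is to prove the identity termwise by invoking Lemma~\ref{c:move_down}, which already supplies the analogous relation for the raw weights $P[\cdot,\cdot]$. First I would unfold the definition of $\phi(v,Q,k)$ as a sum over subsets $S\subseteq Q$ with $|S|=k$, each weighted by the combinatorial factor $\frac{1}{|Q|+1}\binom{|Q|}{k}^{-1}$. The crucial observation is that the hypothesis $Q\subseteq U\setminus\{z_v\}$ forces every such $S$ to avoid $z_v$ as well, i.e. $S\subseteq U\setminus\{z_v\}$, so Lemma~\ref{c:move_down} applies to each individual subset $S$ and yields $P[v,S]=P[p_v,S]\cdot \frac{c_v(z_v)}{c_{p_v}(z_v)}$.

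Then I would substitute this relation into the sum. Since the factor $\frac{c_v(z_v)}{c_{p_v}(z_v)}$ does not depend on $S$, it can be pulled out of the summation over $S$. What remains inside is precisely the sum defining $\phi(p_v,Q,k)$, carrying the identical combinatorial weights $\frac{1}{|Q|+1}\binom{|Q|}{k}^{-1}$. This directly gives $\phi(v,Q,k)=\frac{c_v(z_v)}{c_{p_v}(z_v)}\cdot\phi(p_v,Q,k)$, as claimed.

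There is essentially no obstacle here: the lemma is an immediate consequence of Lemma~\ref{c:move_down} combined with the linearity of the defining sum in the weights $P[\cdot,\cdot]$, since the multiplicative correction $\frac{c_v(z_v)}{c_{p_v}(z_v)}$ is uniform across all summands. The only point requiring a moment of care is verifying the applicability of Lemma~\ref{c:move_down} to each summand, which follows at once from $S\subseteq Q$ together with $z_v\notin Q$.
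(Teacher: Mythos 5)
Your proof is correct and is essentially identical to the paper's own argument: both apply Lemma~\ref{c:move_down} termwise to the sum defining $\phi(v,Q,k)$ (valid since $S\subseteq Q\subseteq U\setminus\{z_v\}$) and pull out the $S$-independent factor $\frac{c_v(z_v)}{c_{p_v}(z_v)}$. The paper states this in one line; you have merely spelled out the same steps.
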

\begin{proof}
  By Lemma~\ref{c:move_down}, the sum~(\ref{eq:dpdef}) defining $\phi(v,Q,k)$ can be obtained by multiplying every term in the sum defining $\phi(p_v,Q,k)$ by $\frac{c_v(z_v)}{c_{p_v}(z_v)}$.
\end{proof}

Now we are ready to describe the basic algorithm.
The algorithm will maintain a real vector $\Psi$:
\begin{equation*}
  \Psi(v,G)=(\Psi_k)_{k=0}^{|G|}=(\phi(v,G,k))_{k=0}^{|G|}
\end{equation*}
for some $v\in\tr$ and $G\subseteq U$.
We call this vector $\Psi$ a \emph{state}.
Note that $||\Psi||_1=\phi(v,G)$.
The algorithms will perform a number of operations on the state.

We will need the two following
vector functions
$\texttt{AddFeature}(\mathbf{b},\delta):\mathbb{R}^\ell\times \mathbb{R}\to \mathbb{R}^{\ell+1}$
and
\linebreak
$\texttt{DelFeature}(\mathbf{b},\delta):\mathbb{R}^{\ell}\times \mathbb{R}\to \mathbb{R}^{\ell-1}$.
Let us assume $\mathbf{b}:=(b_k)_{k=1}^\ell$ and for convenience put $b_{-1}=b_{\ell+1}=0$.
Then $\texttt{AddFeature}$ is defined:
\begin{equation*}
  (\texttt{AddFeature}(\mathbf{b},\delta))_k=\frac{\ell+1-k}{\ell+2}\cdot b_k+\frac{k}{\ell+2}\cdot \delta\cdot b_{k-1}.
\end{equation*}
The function $\texttt{DelFeature}$ is a reverse of $\texttt{AddFeature}$.
Formally, if $b_{-1}'=0$ and $(\texttt{DelFeature}(\mathbf{b},\delta))_k=b_k'$, then:
\begin{equation*}
  b_k'=\frac{\ell+1}{\ell}\left(b_k-\frac{k}{\ell+1}\cdot \delta\cdot b_{k-1}'\right).
\end{equation*}
Intuitively, the purpose of the functions $\texttt{AddFeature}$ and $\texttt{DelFeature}$
is to apply the dynamic programming transition of Lemma~\ref{l:dp}.
Defining these functions in such a general way will prove very useful
in the implementation of a faster algorithm (Section~\ref{a:faster}).
Clearly, both functions can be implemented in $O(\ell)$ time.

\begin{algorithm}[tb]
  \caption{Computing values $\phi_i$, $i\in U$, in $O(LD^2)$ time.}
\label{alg:traverse}
\textbf{procedure} $\texttt{Traverse}(v)$
\begin{algorithmic}[1]
\IF{$z_v\in F_{p_v}$}
  \STATE $\Psi:=\texttt{DelFeature}(\Psi,\delta_{v,z_v})$
\ENDIF
\STATE $\Psi:=\Psi\cdot \frac{c_v(z_v)}{c_{p_v}(z_v)}$
\STATE $\Psi:=\texttt{AddFeature}(\Psi,\delta_{v,z_v})$
\IF {$v\notin \lvs(\tr)$}
  \STATE $\texttt{Traverse}(a_v)$
  \STATE $\texttt{Traverse}(b_v)$
\ELSE
  \FOR{$i\in F_v$}
\STATE $\Psi:=\texttt{DelFeature}(\Psi,\delta_{v,i})$
  \STATE $\phi_i:=\phi_i+||\Psi||_1\cdot f(v)\cdot \left(\delta_{v,i}-1\right)$
\STATE $\Psi:=\texttt{AddFeature}(\Psi,\delta_{v,i})$
\ENDFOR
\ENDIF
\STATE $\Psi:=\texttt{DelFeature}(\Psi,\delta_{v,z_v})$
  \STATE $\Psi:=\Psi\cdot \frac{c_{p_v}(z_v)}{c_{v}(z_v)}$
\IF{$z_v\in F_{p_v}$}
  \STATE $\Psi:=\texttt{AddFeature}(\Psi,\delta_{v,z_v})$
\ENDIF
\end{algorithmic}

\textbf{function} $\texttt{TreeSHAP}$

\begin{algorithmic}[1]
  \STATE $\phi=(0,0,\ldots,0)\in \mathbb{R}^n$
  \STATE $\Psi:=0$
  \STATE $\texttt{Traverse}(a_\rho)$
  \STATE $\texttt{Traverse}(b_\rho)$
  \STATE \textbf{return} $\phi$
\end{algorithmic}
\end{algorithm}

The goal of a recursive procedure $\texttt{Traverse}(v)$ (see Algorithm~\ref{alg:traverse}) is to
iterate through all the states $\Psi(v,F_v)$, $v\in \tr[v]$.
At a leaf $l\in \lvs(\tr)$, the contributions
of the leaf $l$ to each relevant $\phi_i$ will be calculated.
Let us introduce one more notation (also used in the pseudocode of $\texttt{Traverse}$):
\begin{equation}
  \delta_{v,y}:=\frac{[x_v\in I_{v,y}]}{c_v(y)}.
\end{equation}

More precisely, when $\texttt{Traverse}(v)$ is called,
we guarantee that the current state is $\Psi(p_v,F_{p_v})$.
In particular, for $p_v=\rho$, $\Psi=(\phi(\rho,\emptyset,0))=(0)$.
The first step is to move the state to $\Psi(v,F_v)$.
This is done in three substeps:
\begin{enumerate}
  \item First, if $z_v\in F_{p_v}$, we perform
    \begin{equation*}
    \Psi:=\texttt{DelFeature}(\Psi,\delta_{v,z_v}).
    \end{equation*}
    Then, by Lemma~\ref{l:dp}, the state equals $\Psi(p_v,F_{p_v}\setminus \{z_v\})$.
  \item Next, we perform
    \begin{equation*}
    \Psi:=\Psi\cdot \frac{c_v(z_v)}{c_{p_v}(z_v)}.
    \end{equation*}
    Afterwards, since $z_v\notin F_{p_v}\setminus z_v$,
    by Lemma~\ref{l:lift}, the state
    equals $\Psi(v,F_{p_v}\setminus \{z_v\})$.
  \item Then, we perform
    \begin{equation*}
    \Psi:=\texttt{AddFeature}(\Psi,\delta_{v,z_v}).
    \end{equation*}
    which, again by Lemma~\ref{l:dp}, moves the state to
    $\Psi(v,F_v)$.
\end{enumerate}
If $v$ is non-leaf, $\texttt{Traverse}(a_v)$ and $\texttt{Traverse}(b_v)$
are called recursively to process the subtrees of $v$.
Note that the required state invariant is satisfied at the beginning
of these calls.
After the recursive calls return, we move the state back
to $\Psi(p_v,F_{p_v})$ by reversing the steps 1-3.

If $v$ is a leaf, for each $i\in F_v$ we do the following:
\begin{enumerate}
  \item We move the state to $\Psi(v,F_v\setminus\{i\})$ by performing
    $\Psi:=\texttt{DelFeature}(\Psi,\delta_{v,i})$.
  \item We compute $||\Psi||_1$ to obtain $\phi(v,F_v\setminus\{i\})$
    and add the contribution $\phi(v,F_v\setminus\{i\})\cdot f(v)\cdot \left(\frac{[x_i\in I_{v,i}]}{c_v(i)}-1\right)$ to $\phi_i$.
  \item We move the state back to $\Psi(v,F_v)$ by running
    $\Psi:=\texttt{AddFeature}(\Psi,\delta_{v,i})$.
\end{enumerate}

Clearly, the computation at each non-leaf node $v\in\tr$
costs $O(|F_v|)=O(D)$ time.
On the other hand, for a leaf node $l$, we spend
$O(|F_l|^2)=O(D^2)$ time.
As a result, the total running time of the algorithm
is $O(LD^2)$.

For clarity of the pseudocode, we assume that
$\delta_{v,i}$ for any $i\in U$ can be computed in $O(1)$ time
while inside the call $\texttt{Traverse}(v)$.
We now justify this assumption.
In the beginning of Section~\ref{a:simple} we argued how
the values $I_{v,y}$ and $c_v(y)$, $y\in U$, can differ from the corresponding values
$I_{p_v,y}$ and $c_{p_v}(y)$ only for a single $y=z_v$.
Note that the call $\texttt{Traverse}(v)$ only requires
values of the form $I_{v,\cdot}$, $c_v(\cdot)$. As a result,
both these sets of values can be maintained using global
arrays $\texttt{I}:U\to \mathbb{R}^2$ and $\texttt{c}:U\to \mathbb{R}$.
When the call $\texttt{Traverse}(v)$ starts,
we only need to update $\texttt{I}[z_v]$ and $\texttt{c}[z_v]$ to
reflect the difference between the $I_{v,\cdot},c_v(\cdot)$
and $I_{p_v,\cdot},c_{p_v}(\cdot)$.
When the call ends, we revert that change to the arrays $\texttt{I},\texttt{c}$.

\subsection{Banzhaf Values}\label{a:banzhaf}
The same Algorithm~\ref{alg:traverse} can be used to compute
Banzhaf values. We only need to change the definition
of a state $\Psi(v,G)$.

Instead of values $\phi(v,G,k)$, we base our dynamic programming computation
on values $\beta(v,G)$ defined as follows.
\begin{equation}\label{eq:dp2def}
\beta(v,G):=\frac{1}{2^{|G|}}\sum_{S\subseteq G} y(v,G).
\end{equation}
The third coordinate is dropped since the coefficient
of the summands in~(\ref{eq:dp2def}) do not depend
on the size of the set $S$.
We now show an analogue of Lemma~\ref{l:dp} for Banzhaf values.
\begin{lemma}\label{l:dp-ban}
  Let $v\in \tr$ and $G\subseteq U$. Let $y\in U\setminus G$. Then:
  \begin{equation*}
    \beta(v,G\cup\{y\})=\frac{1}{2}\left(1+\frac{[x_f\in I_{v,f}]}{c_v(f)}\right)\beta(v,G).
  \end{equation*}
\end{lemma}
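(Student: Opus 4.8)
The plan is to prove this by exactly mirroring the argument for the Shapley recurrence (Lemma~\ref{l:dp}), but the computation will be considerably shorter because the summands in the definition of $\beta(v,G)$ carry no binomial weight depending on $|S|$. Concretely, I would start from the defining sum
\begin{equation*}
  \beta(v,G\cup\{y\})=\frac{1}{2^{|G|+1}}\sum_{S\subseteq G\cup\{y\}} P[v,S],
\end{equation*}
and partition the index set of subsets $S\subseteq G\cup\{y\}$ according to whether $y\in S$ or $y\notin S$.

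The subsets with $y\notin S$ are exactly the subsets of $G$ and contribute $\sum_{S\subseteq G} P[v,S]$ directly. For the subsets with $y\in S$, I would write $S=S'\cup\{y\}$ with $S'\subseteq G$ and invoke Lemma~\ref{c:add_feature} to rewrite $P[v,S'\cup\{y\}]=P[v,S']\cdot\frac{[x_y\in I_{v,y}]}{c_v(y)}$; summing over $S'\subseteq G$ this contributes $\frac{[x_y\in I_{v,y}]}{c_v(y)}\sum_{S'\subseteq G}P[v,S']$. Adding the two parts, the common sum $\sum_{S'\subseteq G}P[v,S']$ factors out, leaving $\bigl(1+\frac{[x_y\in I_{v,y}]}{c_v(y)}\bigr)$ times it. Finally, pulling one factor of $2$ out of $\frac{1}{2^{|G|+1}}$ and recognizing $\frac{1}{2^{|G|}}\sum_{S'\subseteq G}P[v,S']=\beta(v,G)$ yields precisely the claimed identity.

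There is no serious obstacle here; the one point that needs a word of care is the applicability of Lemma~\ref{c:add_feature}, which requires the added feature to lie outside the current set. This holds because we always apply it with $S'\subseteq G$ and $y\in U\setminus G$, so $y\notin S'$. It is also worth remarking, for the reader, \emph{why} this collapses to a single scalar recurrence rather than the vector recurrence of Lemma~\ref{l:dp}: since the coefficient of $P[v,S]$ in~\eqref{eq:dp2def} is the constant $2^{-|G|}$ independent of $|S|$, there is no need to track the states $\phi(v,G,k)$ separately across cardinalities $k$, which is exactly the source of the $O(1)$ (rather than $O(|G|)$) transition cost claimed for the Banzhaf computation.
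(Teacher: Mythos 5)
Your proof is correct and essentially identical to the paper's own argument: both split the sum over $S\subseteq G\cup\{y\}$ according to whether $y\in S$, apply Lemma~\ref{c:add_feature} to the subsets containing $y$ (valid since the feature $y$ lies outside $S'\subseteq G$), and factor out the common sum $\sum_{S'\subseteq G}P[v,S']$ together with a factor $\tfrac{1}{2}$ to obtain the claimed recurrence. Your closing remark also correctly pinpoints why the Banzhaf state is a single scalar rather than a vector indexed by $k$, which matches the paper's discussion surrounding the lemma.
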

\begin{proof}
  Let $m=|G|$. By Lemma~\ref{c:add_feature} we have:
  \begin{align*}\beta(v,G\cup\{y\})&=\sum_{S\subseteq G\cup\{y\}}\frac{1}{2^{m+1}}P[v,S]\\
    &=\sum_{S\subseteq G} \frac{1}{2^{m+1}}P[v,S]
    +\sum_{y\in S\subseteq G\cup\{y\}} \frac{1}{2^{m+1}}P[v,S]\\
    &=\sum_{S\subseteq G} \frac{1}{2} \cdot \frac{1}{2^m} P[v,S]
    +\sum_{S\subseteq G} \frac{1}{2}\cdot\frac{[x_y\in I_{v,y}]}{c_v(y)}\cdot \frac{1}{2^m} P[v,S]\\
&=\frac{1}{2}\left(1+\frac{[x_y\in I_{v,y}]}{c_v(y)}\right)\beta(v,G). \qedhere
\end{align*} 
\end{proof}
By Lemma~\ref{l:dp-ban}, it also follows
that for $y\notin F_v$, we have $\beta(v,G\cup \{y\})=\beta(v,G)$,
which gives a Banzhaf analogue of Lemma~\ref{l:sum}.
Similarly, the proof of Lemma~\ref{l:lift}
carries on to Banzhaf values and
we get that for $z_v\notin Q$ we have
\begin{equation*}
  \beta(v,Q)=\beta(p_v,Q)\cdot\frac{c_v(z_v)}{c_{p_v}(z_v)}.
\end{equation*}
Finally, we can analogously extract the individual
leaf contributions to the Banzhaf values and
get exactly the same formula:
\begin{equation*}
  \beta_i=\sum_{\substack{l\in\lvs(\tr)\\i\in F_l}}f(l)\cdot \left(\frac{[x_i\in I_{l,i}]}{c_l(i)}-1\right)\cdot \beta(l,F_l\setminus\{i\}).
\end{equation*}
The above discussion shows that precisely
the same Algorithm~\ref{alg:traverse} can be used
if we modify the procedures $\texttt{AddFeature}$,
$\texttt{DelFeature}$ to operate on the Banzhaf
state $\Psi(v,G):=\beta(v,G)$ in place of the Shapley state,
i.e., to work according to Lemma~\ref{l:dp-ban}
instead of Lemma~\ref{l:dp}.
Observe that the implementation of these operations
is simpler for Banzhaf values
and all can be implemented in constant time.

Therefore, computing all values $\beta(l,F_l)$
for $l\in\lvs(\tr)$ takes only $O(L)$ time.
Computing all the values $\beta(l,F_l\setminus\{i\})$
(and thus individual leaf contributions)
for
$l\in\lvs(\tr)$ and $i\in F_l$, however,
still takes $O(LD)$ time since
even the number of such pairs $(l,i)$ can be $\Theta(LD)$.
Nevertheless, this is already
a factor-$D$ speed-up over the basic
algorithm for computing Shapley values.

\subsection{Handling Multiple Trees}\label{s:multiple}
If the tree ensemble consists of more than one tree, i.e., $T>1$,
the output of the model is taken to be the average
of the outputs of the individual trees.
Since so far $g(S)$ was meant to approximate
$\EX[f(x_S,X_{\bar{S}})]$ for a single tree,
by linearity of expectations, we can
redefine $g(S)$ to be the average $g(S)$ over the individual trees.

In the implementation, all we need to do is
to run Algorithm~\ref{alg:traverse} for each of the individual
trees, but we have to initialize $\phi=(0,\ldots,0)\in \mathbb{R}^n$
only once; we should not do it for each individual tree.
After we are finished, the values $\phi_i$ should be
divided by $T$.
Note that, since the $O(n)$ term in the $O(LD^2+n)$ bound
came only from initializing the vector $\phi$,
the running time for $T$ trees is $O(TLD^2+n)$.

\section{Faster Algorithm for Tree Ensembles}\label{a:faster}
Recall Equation~(\ref{eq:leaf-contrib}) that
reduced the computation of all $\phi_i$
to computing individual $O(LD)$ leaf contributions
\begin{equation*}
  \left(\frac{[x_i\in I_{l,i}]}{c_l(i)}-1\right)\cdot f(l)\cdot \phi(l,F_l\setminus\{i\})
\end{equation*}

Let $\lvs_v$ denote the set of leaves $l\in \tr[v]$
such that $v$ is the nearest ancestor $w$ of $l$
with $z_w=z_v$. Note that $\lvs_v$ might not
contain all the leaves in $\tr[v]$ if some
descendant of $v$ satisfies $z_w=z_v$.
What is important, for all $l\in \lvs_v$ we have
\begin{equation*}
  \left(\frac{[x_{z_v}\in I_{l,{z_v}}]}{c_l(z_v)}-1\right)=\left(\frac{[x_{z_v}\in I_{v,{z_v}}]}{c_v(z_v)}-1\right).
\end{equation*}

Moreover, note that the sets $\{\lvs_v:v\in\tr, z_v=i\}$
form a partition of the set
$\{l\in\lvs(\tr):i\in F_l\}$.
Let us also set:
\begin{equation*}
  \Phi^-(v):=\sum_{l\in \lvs_v}f(l)\cdot \phi(l,F_l\setminus \{z_v\}).
\end{equation*}
Therefore, we can rewrite~(\ref{eq:leaf-contrib}) as follows:
\begin{align*}
  \phi_i&=\sum_{\substack{l\in\lvs(\tr)\\i\in F_l}}f(l)\cdot \left(\frac{[x_i\in I_{l,i}]}{c_l(i)}-1\right)\cdot \phi(l,F_l\setminus\{i\})\\
  &=\sum_{\substack{v\in\tr\\z_v=i}}\sum_{l\in\lvs_v}f(l)\cdot \left(\frac{[x_i\in I_{l,i}]}{c_l(i)}-1\right)\cdot \phi(l,F_l\setminus\{i\})\\
  &=\sum_{\substack{v\in\tr\\z_v=i}} \left(\frac{[x_i\in I_{v,i}]}{c_v(i)}-1\right)\sum_{l\in\lvs_v}f(l)\cdot \phi(l,F_l\setminus\{i\})\\
  &=\sum_{\substack{v\in\tr\\z_v=i}} \left(\frac{[x_i\in I_{v,i}]}{c_v(i)}-1\right)\cdot \Phi^-(v).
\end{align*}
Note that the above derivation provides an $O(L)$-time
reduction of computing all $\phi_i$ to computing
all values $\Phi^-(v)$.
The remaining part of this section is thus devoted
to computing the values $\Phi^-(v)$, $v\in \tr$.

Before we continue, we need to make a subtle change
to our previous algorithm.
Both the correctness and efficiency of our
algorithm will crucially rely on the assumption
that all sets $F_l$ for $l\in\lvs(\tr)$ have the same size.
This
could be ensured, for example, by extending all smaller $F_l$ with $D-|F_l|$
distinct dummy features that do not appear in $F_l$ -- recall
that, by Lemma~\ref{l:sum}, adding dummy features does not change $\phi(v,G)$,
for any $G\subseteq U$, so it does not influence our results.
Unfortunately, adding a dummy feature to $F_l$
by simply using Lemma~\ref{l:dp} costs $\Theta(D)$ time.
Therefore, if $T$ was very
unbalanced, padding all $F_l$ could cost as much as $\Theta(LD^2)$ time.

Instead, let $q_1,\ldots,q_D$ be distinct artificial features \emph{not} appearing
in the nodes of $\tr$. For \emph{all} $v\in \tr$ let us define
\begin{equation*}
  F^*_v=F_v\cup \{q_1,\ldots, q_{D-|F_v|}\}.
\end{equation*}
Observe that then $F^*_\rho=\{q_1,\ldots,q_D\}$ for the root $\rho$ of $\tr$, and for each
non-root $v$ we have
\begin{equation*}
  F^*_v=\begin{cases}F^*_{p_v}&\text{ if }z_v\in F_{p_v}\\
    F^*_{p_v}\setminus\{q_{D-|F_{p_v}|}\}\cup \{z_v\}&\text{ otherwise.}
\end{cases}
\end{equation*}
With sets $F^*_v$ defined like this, $v\in\tr$, by Lemma~\ref{l:sum}, we have:
\begin{equation*}
  \phi(v,F_v)=\phi(v,F_v^*),
\end{equation*}
and consequently:
\begin{equation*}
  \Phi^-(v)=\sum_{l\in \lvs_v}f(l)\cdot \phi(l,F_l^*\setminus\{z_v\}).
\end{equation*}

We modify the basic algorithm computing all the states
$\Psi(v,F_v)$ so that it computes all the states $\Psi(v,F_v^*)$.
It is very easy to change it to achieve that.
First of all, the initial state $\Psi(\rho,F^*_\rho)$
is initialized in $O(D^2)=O(LD)$ time by applying
Lemma~\ref{l:dp} $D$ times.
Recall that the first step of $\texttt{Traverse}$ is
to perform $\Psi:=\texttt{DelFeature}(\Psi,\delta_{v,z_v})$ if $z_v\in F_{p_v}$.
Now, all we need to do is to add an extra condition that,
if $z_v\notin F_{p_v}$, we also perform $\Psi:=\texttt{DelFeature}(\Psi,\delta_{v,q_{D-|F_{p_v}|}})$
to ``remove'' a dummy feature from the state.
Of course, this change has to be reflected also
in the code that undoes state manipulation
after the recursive calls return.
See Algorithm~\ref{alg:traverse-fast}.

\begin{algorithm}[ht!]
  \caption{Computing values $\phi_i$, $i\in U$, in $O(LD)$ time.}
\label{alg:traverse-fast}
\textbf{procedure} $\texttt{TraverseFast}(v)$
\begin{algorithmic}[1]
\IF{$z_v\in F_{p_v}$}
  \STATE $\Psi:=\texttt{DelFeature}(\Psi,\delta_{v,z_v})$.
  \STATE $\texttt{Push}(H[z_v],v)$.
\ELSE
  \STATE $\Psi:=\texttt{DelFeature}(\Psi,\delta_{v,q_{D-|F_{p_v}|}})$.
\ENDIF
\STATE $\Psi:=\Psi\cdot \frac{c_v(z_v)}{c_{p_v}(z_v)}$.
\STATE $\Psi:=\texttt{AddFeature}(\Psi,\delta_{v,z_v})$.
\IF {$v\notin \lvs(\tr)$}
  \STATE $\texttt{TraverseFast}(a_v)$.
  \STATE $\texttt{TraverseFast}(b_v)$.
  \STATE $S(v):=S(a_v)+S(b_v)$.
\ELSE
  \STATE $S(v):=f(v)\cdot \Psi$.
\ENDIF
  \STATE $\Gamma(v):=S(v)$
\WHILE{$\texttt{Top}(H[z_v])\neq v$}
  \STATE $w:=\texttt{Pop}(H[z_v])$
  \STATE $\Gamma(v):=\Gamma(v)-S(w)$
\ENDWHILE
\STATE $\Gamma^-(v):=\texttt{DelFeature}(\Gamma(v),\delta_{v,z_v})$.
\STATE $\phi_i:=\phi_i+||\Gamma^-(v)||_1\cdot \left(\delta_{v,{z_v}}-1\right)$.
\STATE $\Psi:=\texttt{DelFeature}(\Psi,\delta_{v,z_v})$.
  \STATE $\Psi:=\Psi\cdot \frac{c_{p_v}(z_v)}{c_{v}(z_v)}$.
\IF{$z_v\in F_{p_v}$}
  \STATE $\Psi:=\texttt{AddFeature}(\Psi,\delta_{v,z_v})$.
\ELSE
  \STATE $\Psi:=\texttt{AddFeature}(\Psi,\delta_{v,q_{D-|F_{p_v}|}})$.
\ENDIF
\end{algorithmic}

\textbf{function} $\texttt{TreeSHAPFast}$

\begin{algorithmic}[1]
  \STATE $\phi=(0,0,\ldots,0)\in \mathbb{R}^n$.
  \STATE $\Psi:=0$.
  \FOR{$i=1$\textbf{ to }$D$}
    \STATE $\Psi:=\texttt{AddFeature}(\Psi,0)$.
  \ENDFOR
  \FOR{$y\in U$}
    \STATE $H[y]:=\emptyset$
  \ENDFOR
  \STATE $\texttt{Traverse}(a_\rho)$.
  \STATE $\texttt{Traverse}(b_\rho)$.
  \STATE \textbf{return} $\phi$
\end{algorithmic}
\end{algorithm}

The faster algorithm (implemented using a recursive
procedure $\texttt{TraverseFast}$) avoids computing, for each $l\in \lvs(\tr)$,
the states $\Psi(l,F_l^*\setminus\{i\})$ for all $i\in F_l$,
which could take as much as $\Theta(LD^2)$ time in the basic algorithm.
However, to achieve speed-up, the faster algorithm
uses additional ``bottom-up'' steps (i.e., following the recursive calls).
These steps compute
some auxiliary data that we describe next.

For all $v\in\tr$, let us define
vectors $\Gamma(v),S(v)\in\mathbb{R}^D$, such that for $k=0,\ldots,D$:
\begin{align*}
  \Gamma(v)_k&=\sum_{l\in \lvs_v}f(l)\cdot \phi(l,F_l^*,k),\\
  S(v)_k&=\sum_{l\in \lvs(T[v])}f(l)\cdot \phi(l,F_l^*,k).
\end{align*}

\begin{lemma}
  The vectors $S(v)$ and $\Gamma(v)$ for all $v\in\tr$ and $k\in\{0,\ldots,D\}$ can be computed using additional bottom-up steps in $\texttt{TraverseFast}$
  that cost $O(LD)$ extra time in total.
\end{lemma}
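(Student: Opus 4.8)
The plan is to prove the claim for the two families of vectors separately: $S(v)$ is handled by a direct bottom-up recurrence, while $\Gamma(v)$ requires a set-partition identity together with an amortized analysis of the stacks $H[\cdot]$. First I would dispatch $S(v)$. Since $S(v)_k=\sum_{l\in\lvs(\tr[v])}f(l)\cdot\phi(l,F_l^*,k)$, it satisfies the obvious recurrence: at a leaf $l$ the state currently maintained by $\texttt{TraverseFast}$ is exactly $\Psi=\Psi(l,F_l^*)=(\phi(l,F_l^*,k))_k$, so $S(l)=f(l)\cdot\Psi$; at an internal node $\lvs(\tr[v])=\lvs(\tr[a_v])\sqcup\lvs(\tr[b_v])$ gives $S(v)=S(a_v)+S(b_v)$. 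Each such step is a scaling or an addition of $(D+1)$-dimensional vectors, costing $O(D)$, and as $\tr$ has $O(L)$ nodes the total cost of maintaining all $S(v)$ is $O(LD)$.

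Next I would reduce $\Gamma(v)$ to the $S$-vectors. Let $N(v)$ denote the set of \emph{topmost} strict descendants $w$ of $v$ with $z_w=z_v$, that is, the first node carrying feature $z_v$ encountered on each continuation of the path below $v$. The nodes of $N(v)$ are pairwise incomparable, so their subtrees are disjoint, and every leaf of $\tr[v]$ outside $\lvs_v$ lies below exactly one of them; this yields the disjoint decomposition
\[
  \lvs(\tr[v]) = \lvs_v \sqcup \bigsqcup_{w\in N(v)} \lvs(\tr[w]).
\]
Summing $f(l)\cdot\phi(l,F_l^*,\cdot)$ over both sides and applying the definitions of $\Gamma$ and $S$ then gives $\Gamma(v)=S(v)-\sum_{w\in N(v)}S(w)$. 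The only remaining task is to perform exactly this subtraction, which is the role of the stack $H[z_v]$.

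The main obstacle, and the step I would argue most carefully, is that the while loop inside $\texttt{TraverseFast}(v)$ pops from $H[z_v]$ precisely the nodes of $N(v)$, each once, so that $\Gamma(v):=S(v)$ followed by the subtractions $\Gamma(v):=\Gamma(v)-S(w)$ computes the right vector. I would prove this by induction over the DFS, maintaining the invariant that once $\texttt{TraverseFast}(w)$ returns for a node $w$ that was pushed, $w$ sits on top of $H[z_w]$ and every strict descendant of $w$ carrying feature $z_w$ has already been removed from $H[z_w]$ by $w$'s own loop. A node $w$ is pushed exactly when $z_w\in F_{p_w}$, i.e. when it re-splits on a feature already used above it; every $w\in N(v)$ qualifies since $z_w=z_v\in F_v\subseteq F_{p_w}$, whereas a non-topmost feature-$z_v$ descendant is cleared inside the subtree of the $N(v)$-node lying above it. Hence, when control returns to $v$, the portion of $H[z_v]$ above $v$ is exactly $N(v)$. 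The delicate case is $z_v\notin F_{p_v}$, where $v$ itself is not pushed; here one uses that $z_v$ occurs for the first time at $p_v$ on the path to $v$, so $H[z_v]$ carries no feature-$z_v$ entries beneath $N(v)$ at that moment, and stopping the loop at $v$ (with a per-feature sentinel marking such first occurrences) again removes exactly $N(v)$.

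Finally, for the running time I would argue by amortization over the whole traversal. Each node is pushed onto a stack at most once, namely at its own entry, and is consequently popped at most once, so the total number of pops is $O(L)$; each pop triggers a single $O(D)$ vector subtraction into $\Gamma(v)$. The initial assignment $\Gamma(v):=S(v)$ and the maintenance of $S(v)$ each cost $O(D)$ per node, as does the subsequent $\texttt{DelFeature}$ producing $\Gamma^-(v)$. Summing the $O(D)$ charges over the $O(L)$ nodes and adding the $O(L)$ pops of cost $O(D)$ each yields the claimed $O(LD)$ total extra time.
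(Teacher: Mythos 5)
Your proof is correct and follows essentially the same route as the paper's: the child-sum recurrence $S(v)=S(a_v)+S(b_v)$ seeded at leaves by the maintained state, the partition identity $\Gamma(v)=S(v)-\sum_{w\in N(v)}S(w)$ (your $N(v)$ is exactly the paper's $Q_v$), the per-feature stacks $H[\cdot]$ whose entries above $v$ are popped in the while loop, and the push/pop amortization giving $O(L)$ pops of cost $O(D)$ each. If anything, you handle the edge case $z_v\notin F_{p_v}$ (where $v$ itself is never pushed and the loop must drain the stack, guarded by a sentinel) more explicitly than the paper, whose prose claim that $v\in H[z_v]$ during its own call silently ignores this case.
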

\begin{proof}
First note that
  for $l\in \lvs(\tr)$ we have $S(v,k)={f(l)\cdot \phi(l,F^*_l,k)}$
  and $\phi(l,F^*_l,k)\in \Psi(l,F^*_l)$ which is a state
  that the basic procedure $\texttt{Traverse}$ computes.

  For all non-leaf nodes $v\in\tr$ we in turn have
  \begin{equation*}
    S(v)=S(a_v)+S(b_v),
  \end{equation*}
  and thus each of such $O(LD)$ values $S(v)_k$ can be computed
  in constant time after the recursive calls return.

  Given the values $S(v)_k$, it is not very hard to obtain values
  $\Gamma(v)_k$. Let $Q_v$ be the set of nodes $w\in \tr[v]$
  such that $z_w=z_v$ and $v$ is the nearest ancestor
  of $w$ satisfying $z_w=z_v$.
  Note that we have
  \begin{equation*}
    \lvs_v = \lvs(\tr[v])\setminus \left(\bigcup_{w\in Q_v} \lvs(\tr[w])\right),
  \end{equation*}
  and thus
  \begin{equation*}
    \Gamma(v)=S(v)-\sum_{w\in Q_v} S(w).
  \end{equation*}
  Observe that the total size of sets $Q_v$ (over all $v\in\tr$) is $O(L)$,
  so if we are allowed to iterate through $Q_v$ whenever we wish to compute
  $\Gamma(v)$, the computation of  $\Gamma(v)$
  takes $O(LD)$ time as well.
  Let $g_{w,j}$ denote the nearest ancestor of $w\in \tr$
  with $z_w=j$.
  One way to enable iterating through
  $Q_v$ when $v$ is processed bottom-up,
  is to maintain, for each feature $j\in U$,
  a global stack $H[j]$ containing
  all the nodes $w$ such that $z_w=j$
  and $\texttt{TraverseFast}(g_{w,j})$
  has not yet completed.
  The stack elements are sorted using
  the pre-order of the nodes of $v$, so that
  the node $w$ with the highest pre-order
  is at the top of $H[z_w]$.
  The stack can be updated in $O(1)$ time
  whenever a recursive call starts.
  Observe that $v\in H[z_v]$ when $\texttt{TraverseFast}(v)$
  has started but has not yet finished.
  Now, given $H[z_v]$, it is enough to note
  that $Q_v$ equals precisely the set of elements
  of $H[z_v]$ that lie higher than $v$.
  Thus, one can indeed iterate through $Q_v$
  in $O(|Q_v|)$ time as desired.
  Moreover, $Q_v$ constitutes precisely the
  set of elements that have to be popped from the
  stack $H[z_v]$ when $\texttt{TraverseFast}(v)$ returns.
  The asymptotic cost of popping stack elements can charged
  to the corresponding pushes and thus can be neglected.
\end{proof}

Finally, we show that, roughly speaking, the same recursive
relation as in Lemma~\ref{l:dp} can be
applied to the values $\Gamma(v)_k$
in order to obtain the required values $\Phi^-(v)$.

\begin{lemma}\label{l:dp2}
Let $\Gamma^-(v)_k=\sum_{l\in\lvs_v}f(l)\cdot \phi(l,F^*_l\setminus\{z_v\},k)$.
  For any $l\in \lvs_v$ and $k\in \{0,\ldots,D\}$ we have:
\begin{align*}
  &\Gamma(v)_k=\frac{D-k}{D+1}\cdot \Gamma^-(v)_k + \frac{k[x_{z_v}\in I_{v,z_v}]}{(D+1)c_v(z_v)}\cdot \Gamma^-(v)_{k-1}.
\end{align*}

\end{lemma}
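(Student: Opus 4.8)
The plan is to reduce this vector identity to a single per-leaf application of the dynamic-programming recurrence of Lemma~\ref{l:dp}, followed by a summation over the leaves in $\lvs_v$. First I would use the two structural facts set up in this section: by the padding construction every set $F_l^*$ has exactly $D$ elements, and $z_v\in F_l^*$ for every $l\in\lvs_v$. Hence for each such leaf I can split $F_l^* = (F_l^*\setminus\{z_v\})\cup\{z_v\}$ with $|F_l^*\setminus\{z_v\}|=D-1$, and invoke Lemma~\ref{l:dp} with $G=F_l^*\setminus\{z_v\}$ and $y=z_v$ to obtain
\begin{equation*}
  \phi(l,F_l^*,k)=\frac{D-k}{D+1}\cdot\phi(l,F_l^*\setminus\{z_v\},k)+\frac{k}{D+1}\cdot\frac{[x_{z_v}\in I_{l,z_v}]}{c_l(z_v)}\cdot\phi(l,F_l^*\setminus\{z_v\},k-1).
\end{equation*}

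The crucial and most delicate step is to replace the \emph{leaf-dependent} coefficient $[x_{z_v}\in I_{l,z_v}]/c_l(z_v)$ by the \emph{node-dependent} quantity $[x_{z_v}\in I_{v,z_v}]/c_v(z_v)$. This is precisely the property already recorded at the beginning of this section: since $l\in\lvs_v$ means that $v$ is the nearest ancestor of $l$ with split feature $z_v$, no node strictly between $v$ and $l$ stores $z_v$, so neither $I_{\cdot,z_v}$ nor $c_{\cdot}(z_v)$ changes along that subpath. Consequently $I_{l,z_v}=I_{v,z_v}$ and $c_l(z_v)=c_v(z_v)$, and the coefficient becomes independent of the particular leaf $l$.

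Once this substitution is made, both multipliers $\frac{D-k}{D+1}$ and $\frac{k}{D+1}\cdot\frac{[x_{z_v}\in I_{v,z_v}]}{c_v(z_v)}$ are constants with respect to $l$. I would then multiply the per-leaf identity by $f(l)$, sum over all $l\in\lvs_v$, and pull these two constants outside the sums. Recognizing $\sum_{l\in\lvs_v}f(l)\,\phi(l,F_l^*,k)=\Gamma(v)_k$ together with $\sum_{l\in\lvs_v}f(l)\,\phi(l,F_l^*\setminus\{z_v\},k)=\Gamma^-(v)_k$ (and the analogous identity for index $k-1$) yields exactly the claimed recurrence; the boundary cases $k=0$ and $k=D$ are handled automatically by the convention $\phi(\cdot,\cdot,k)=0$ for $k$ out of range used in Lemma~\ref{l:dp}.

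The main obstacle is entirely the coefficient-factoring argument. Applied to an individual leaf, Lemma~\ref{l:dp} carries a coefficient that a priori depends on the leaf's own interval $I_{l,z_v}$ and product $c_l(z_v)$, and the reduction succeeds only because the definition of $\lvs_v$ forces these to coincide with $I_{v,z_v}$ and $c_v(z_v)$. The uniform size $|F_l^*|=D$ is equally essential: it is what makes the multiplier $\frac{D-k}{D+1}$ the same for every leaf, whereas without the dummy-feature padding the exponent in Lemma~\ref{l:dp} would be $|F_l|$, which varies across leaves and would prevent the constants from being pulled out of the summation.
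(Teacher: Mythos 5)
Your proof is correct and follows essentially the same route as the paper's: apply Lemma~\ref{l:dp} with $G=F_l^*\setminus\{z_v\}$ and $y=z_v$ (valid since $|F_l^*|=D$ and $z_v\in F_l^*$ for every $l\in\lvs_v$), replace the leaf-dependent coefficient $[x_{z_v}\in I_{l,z_v}]/c_l(z_v)$ by the node-dependent $[x_{z_v}\in I_{v,z_v}]/c_v(z_v)$, and sum over $l\in\lvs_v$ weighted by $f(l)$. If anything, you are more explicit than the paper on the one delicate point: the paper silently uses the fact (recorded earlier in that section) that this coefficient is constant on $\lvs_v$, whereas you spell out why the definition of $\lvs_v$ forces $I_{l,z_v}=I_{v,z_v}$ and $c_l(z_v)=c_v(z_v)$.
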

\begin{proof}
  Since $z_v\in F_l^*$ for all $l\in \lvs_v$, by Lemma~\ref{l:dp},
  we have
  \begin{align*}
    \phi(l,F^*_l,k)&=\frac{D-k}{D+1}\biggl(\phi(l,F^*_l\setminus\{z_v\},k)\left.-\frac{k[x_{z_v}\in I_{l,z_v}]}{(D+1)c_l(z_v)}\phi(l,F^*_l\setminus\{z_v\},k-1)\right)\\
 &=\frac{D-k}{D+1}\biggl(\phi(l,F^*_l\setminus\{z_v\},k)\left.-\frac{k[x_{z_v}\in I_{v,z_v}]}{(D+1)c_v(z_v)}\phi(l,F^*_l\setminus\{z_v\},k-1)\right).
  \end{align*}
We obtain the desired equality by summing the above
through all $l\in\lvs_v$.
\end{proof}

For a convenient implementation, note that the vectors
$\Gamma^-(v)_k$ are obtained from the corresponding vectors
$\Gamma(v)_k$ in exactly the same way as $\Psi(v,F_v\setminus\{z_v\})$
was obtained from $\Psi(v,F_v)$
in the basic algorithm.
Therefore, the functions $\texttt{AddFeature}$, $\texttt{DelFeature}$
can be reused.
More specifically, the following hold:
\begin{align*}
  \Gamma(v)=\texttt{AddFeature}(\Gamma^-(v),\delta_{v,z_v}),\qquad
  \Gamma^-(v)=\texttt{DelFeature}(\Gamma(v),\delta_{v,z_v}).
\end{align*}

\section{Other Feature Attribution Methods}
\label{section:other}
Feature importance values summarize a complicated ensemble model and
provide insight into what features drive the model's prediction. There can be
various types of explanation methods to compute such values: model-dependent
or model-agnostic methods, global or local explanation methods.

\paragraph{Explanation methods for trees:} Global feature importance values are
computed for an entire dataset in mainly three different ways. The basic global
approach, \emph{Split Count}, is to count the number of times a feature is used
for splitting \cite{ChenG16}. However, this method fails to account for the impacts
of different splits. The \emph{Gain} approach to feature importance \cite{Breiman}
is to attribute the reduction of loss contributed by each split in each decision tree
and it is widely used as the basis for feature selection methods~\cite{Chebrolu2005,HuynhThu2010InferringRN,Sandri2008ABC}. Another commonly used approach,
\emph{Permutation}, is to randomly permute the data column corresponding to a
feature in the test set and observe the change in the model's loss \cite{Breiman2004}.
If the model is heavily dependent on the feature then permuting it should create a
large increase in the model's loss.

These approaches are designed to estimate the global importance of a feature
over an entire dataset, so they are not directly applicable to local explanations
that are specific to each prediction. Local explanation methods for computing
feature importance values for a single prediction are not well studied for trees.
Only a couple of tree-specific local explanation methods were known previously.
One is to just report the decision path, which is not useful for large tree ensembles.
The other one is by Saabas~\cite{Saabas} which is a heuristic method that measures
the difference in the model's expected output. The Saabas method explains a prediction
by following the decision path of the current input and attributing the differences in the
expected output of the model to each of the features along the path. The expected value
of every node in the tree is the average of the model output over the training samples
going through that node. For explaining an ensemble model made up of a sum of many
trees, the Saabas value for the ensemble is defined as the sum of the values for each tree.

As noted in \cite{lundberg2018consistent}, the feature importance values from the gain,
split count, and Saabas methods are all inconsistent i.e., a model can be modified so that
it relies more on a given feature, yet the importance assigned to that feature decreases.

\paragraph{Model-agnostic methods:} One of the most common local explanation
methods in deep learning literature is to take the gradient of the model's output with
respect to its inputs at the current sample or multiplying the gradient times the value
of the input features. As depending entirely on the gradient of the model at a single
point can often be misleading \cite{Shrikumar2016NotJA} various other methods
have also been proposed~\cite{Springenberg2015StrivingFS,Zeiler2014VisualizingAU,Bach2015OnPE,Shrikumar2016NotJA,Kindermans2018LearningHT,Ancona2018TowardsBU}.

Model-agnostic methods on the other hand make no assumptions about the internal
structure of the model and depend on the relationship between changes in the model
inputs and model outputs. This is achieved by training a global mimic model to approximate
the original model, then locally explaining the mimic model \cite{Baehrens10a,Plumb18}.
Alternatively, the mimic model can be fit into the original model locally for each prediction.
In the LIME method~\cite{Ribeiro2016WhySI} the coefficients are used as an explanation
for a local linear mimic model. In Anchors~\cite{ribeiro2018anchors} the rules are used
as the explanation for a local decision rule mimic model.

Recently, several methods for the local explanation of model predictions (such as
LIME~\cite{Ribeiro2016WhySI}, DeepLIFT \cite{Shrikumar2016NotJA,shrikumar17a},
Layer-wise Relevance Propagation~\cite{Bach2015OnPE}, and three methods from
cooperative game theory: Shapley regression values \cite{Lipovetsky2001AnalysisOR},
Shapley sampling values~\cite{StrumbeljK14}, and Quantitative Input
Influence~\cite{Datta2016AlgorithmicTV}) are unified into a single class of
\emph{additive feature attribution methods}~\cite{Lundberg2017}. This class contains
methods that explain a model's output as a sum of real values attributed to each input
feature. It is of particular interest as there is a unique optimal explanation approach in
the class that satisfies three desirable properties: local accuracy, missingness, and
consistency~\cite{Roth,Shapley53}. \emph{Local accuracy} (also called \emph{Efficiency}
or \emph{Completeness}) means that the sum of the feature attributions is equal to the
output of the function we want to explain. \emph{Missingness} (also called \emph{Sensitivity},
or \emph{Null-player axiom}) means that missing features are given no importance and
\emph{Consistency} (also called \emph{Monotonicity}) means that if a feature has a larger
impact on the model after a change then the attribution assigned to that feature can only increase.

One can use model-agnostic local explanation methods to explain tree models however
their dependence on post-hoc modeling of an arbitrary function can make them slow or
might suffer from sampling variability for models with many input features~\cite{Lundberg2020}.
Although such methods are often practical for individual explanations, but can quickly
become impractical for explaining entire datasets.

\section{Conclusions}
The contribution of this paper is twofold. First, we have developed new and more efficient algorithms for computing feature importance measures for tree ensemble
models that are based on Banzhaf and Shapley values. These results improve the running time of previously known methods.
Second, we present the first extensive comparison between Shapley and Banzhaf values. We observe that both methods deliver explanations of essentially the same strength by returning
almost the same ordering of features. However, these experimental results indicate that Banzhaf values have several important advantages over Shapley values, i.e.,
allow for faster algorithms as well as these algorithms make much lower numerical errors. As argued in the introduction, one should not base his decision on which
index to use based on axiomatic characterizations, and other aspects should be considered. In particular, our study has delivered two important arguments for the
usage of Banzhaf values.

\clearpage
\bibliographystyle{abbrv}
\bibliography{references}

\end{document}